\newtheorem{theorem}{Theorem}
\newtheorem{lemma}[theorem]{Lemma}
\title{Incentivizing Safer Actions in Policy Optimization for \\ Constrained Reinforcement Learning}
\author{
Somnath Hazra$^1$
\and
Pallab Dasgupta$^2$\And
Soumyajit Dey$^1$\\
\affiliations
$^1$Indian Institute of Technology Kharagpur, India\\
$^2$Synopsys, USA\\
\emails
somnathhazra@kgpian.iitkgp.ac.in,
pallabd@synopsys.com,
soumya@cse.iitkgp.ac.in
}
\begin{document}

\maketitle

\insert\footins{Code and Supplementary Material available here: \href{https://github.com/somnathhazra/IP3O}{https://github.com/somnathhazra/IP3O}.}

\begin{abstract}
    Constrained Reinforcement Learning (RL) aims to maximize the return while adhering to predefined constraint limits, which represent domain-specific safety requirements. In continuous control settings, where learning agents govern system actions, balancing the trade-off between reward maximization and constraint satisfaction remains a significant challenge. Policy optimization methods often exhibit instability near constraint boundaries, resulting in suboptimal training performance. To address this issue, we introduce a novel approach that integrates an adaptive incentive mechanism in addition to the reward structure to stay within the constraint bound before approaching the constraint boundary. Building on this insight, we propose Incrementally Penalized Proximal Policy Optimization (IP3O), a practical algorithm that enforces a progressively increasing penalty to stabilize training dynamics. Through empirical evaluation on benchmark environments, we demonstrate the efficacy of IP3O compared to the performance of state-of-the-art Safe RL algorithms. Furthermore, we provide theoretical guarantees by deriving a bound on the worst-case error of the optimality achieved by our algorithm.
\end{abstract}

\section{Introduction}
\label{sec1}

Constraint satisfaction represents a critical challenge in Reinforcement Learning (RL), particularly due to its significant implications in real-world applications such as robotics \cite{levine2016end,ono2015chance}, autonomous driving \cite{fisac2018general,kiran2021deep,fernandez2023trustworthy}, healthcare \cite{yu2021reinforcement}, finance \cite{mcnamara2016law}, etc. These tasks involve learning a policy to address sequential decision-making problems to maximize return under predefined safety constraints and accounting for uncertainties in the environment. Such requirements are commonly formulated using the Constrained Markov Decision Process (CMDP) framework \cite{altman2021constrained}, which extends the conventional Markov Decision Process (MDP) to accommodate safety-critical considerations. Consequently, the applicability of traditional RL methods becomes limited in these settings, necessitating the development of new approaches to address constraint satisfaction effectively.

Traditional RL algorithms focus solely on maximizing the return \cite{schulman2017proximal}. To satisfy the cost budget, the safe RL methods typically incorporate an additional term into the loss function. However, since the cost component is not inherently optimized, the imposed penalty significantly dictates the resulting policy's behavior; it should be rewarding enough as far as the cost constraint limits are respected. For instance, Constrained Policy Optimization (CPO) \cite{achiam2017constrained} uses a second-order expansion term to approximate the cost but suffers from high computational overhead and suboptimal performance due to approximation errors. Primal-dual methods transform the constrained problem into an unconstrained dual via Lagrangian multipliers \cite{tessler2018reward,ding2020natural,yu2019convergent,dai2023augmented}, yet these approaches are prone to constraint violations and oscillatory behavior in practical scenarios \cite{stooke2020responsive}. Projection-based methods apply a secondary optimization to project unsafe actions into a feasible region \cite{zhang2020first,yang2020projection,yang2022constrained}; however, these methods are computationally expensive and can underperform in certain environments \cite{dai2023augmented}.

In contrast, penalty function approaches directly encode the constraints in the primal space itself. By introducing a penalty term into the loss function based on the constraint cost, these methods establish a penalty barrier to regulate policy behavior \cite{liu2020ipo,zhang2022penalized,zhang2023evaluating}. The choice of barrier function is critical, as it determines the trade-off between performance and constraint satisfaction \cite{zhang2023evaluating}. Despite their effectiveness, most existing approaches impose penalties only after a constraint violation occurs, offering no proactive incentive for the policy to remain within predefined safety limits. This strategy often compromises the stability of training and constraint satisfaction. To address this limitation, we propose a novel approach that transforms the cost function into a positive incentive within the safe region and gradually converts it into a penalty as the safety boundary is breached. The design ensures a more dynamic adherence to constraints, leading to more stable and efficient optimization.

\begin{figure}[!ht]
    \centering
    \includegraphics[width=0.75\linewidth]{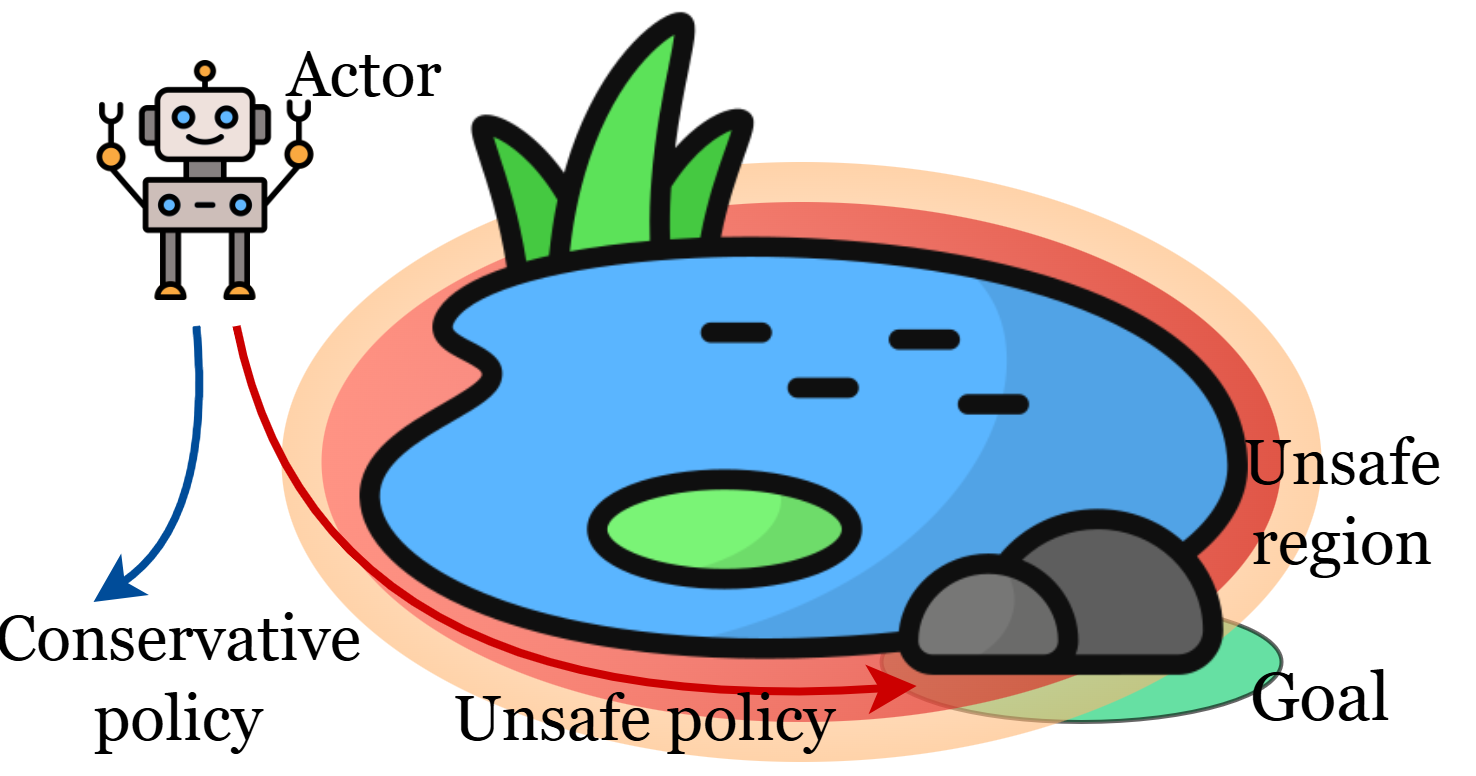}
    \caption{Illustration of a risky trajectory and a conservation trajectory under unknown environmental dynamics.}
    \label{fig1}
\end{figure}

To illustrate our hypothesis, consider a robot tasked with reaching the rock, i.e. the goal, located on the opposite side of a pond while avoiding falling into the water (Figure \ref{fig1}). The shortest path, marked in red, is the most efficient but also the riskiest due to environmental uncertainties. While on-policy algorithms in RL can mitigate such risks \cite{sutton2018reinforcement}, given the cost function, the reward function alone may fail to discourage unsafe actions effectively in a CMDP. Introducing an incentive derived from the cost function encourages safer behavior. However, excessive incentives for avoiding risk can result in overly conservative policies that fail to achieve the goal, as indicated by the blue path in the figure. To address this trade-off, we propose Incrementally Penalized Proximal Policy Optimization (IP3O), which incorporates an adaptive penalty mechanism to balance safety and performance under general model-free settings.

The primary contributions are summarized as follows:
\begin{itemize}
    \item We design a novel penalty function to adaptively incentivize safe actions based on the cost function, ensuring that safety constraints are satisfied without compromising return optimization.
    
    \item We propose the Incrementally Penalized Proximal Policy Optimization (IP3O) algorithm, which integrates the penalty barrier function to achieve efficient learning of safe policies. Additionally, we provide theoretical guarantees by deriving a worst-case performance bound for the algorithm.
    
    \item We conduct extensive empirical evaluations of IP3O against multiple state-of-the-art approaches in safe RL across benchmark environments, demonstrating its superior performance in balancing safety and return.
\end{itemize}
These contributions collectively address key challenges in constrained RL by bridging the gap between safety and performance in real-world scenarios. We next present the relevant background in the Preliminaries section.

\section{Preliminaries}
\label{sec2}

RL problems are modeled using a Markov Decision Process (MDP), defined by the tuple $\langle \mathcal{S, A, R}, P, \rho, \gamma \rangle$, where $\mathcal{S}$ denotes the state space, $\mathcal{A}$ denotes the action space, $\mathcal{R}: \mathcal{S} \times \mathcal{A} \to \mathbb{R}$ is the reward function, $P: \mathcal{S} \times \mathcal{A} \times \mathcal{S} \to [0, 1]$ is transition probability function, $\rho: \mathcal{S} \to [0, 1]$ is the initial state distribution, and $\gamma \in (0, 1)$ is the discount factor for calculating the return. In model-free settings, $P$ is unknown. The objective is to find a policy $\pi: \mathcal{S} \to \Delta(\mathcal{A})$ that maximizes the expected discounted return: $\mathcal{J}_{\mathcal{R}}(\pi) = \mathbb{E}_{\tau \sim \pi} [ \sum_{t=0}^{\infty} \gamma^t \mathcal{R}(s_t, a_t) ]$, where $\tau$ is the trajectory. The utility of a state $s$ is measured using the value function: $V_{\mathcal{R}}^{\pi}(s) = \mathbb{E}_{\tau \sim \pi} [ \sum_{t=0}^{\infty} \gamma^t \mathcal{R}(s_t, a_t) | s_0 = s]$; and that after taking action $a$ at state $s$ is given by the state-action value function: $Q_{\mathcal{R}}^{\pi}(s, a) = \mathbb{E}_{\tau \sim \pi} [ \sum_{t=0}^{\infty} \gamma^t \mathcal{R}(s_t, a_t) | s_0 = s, a_0 = a]$. The advantage of taking action $a$ at $s$ is given by the advantage function: $A_{\mathcal{R}}^{\pi}(s, a) = Q_{\mathcal{R}}^{\pi}(s, a) - V_{\mathcal{R}}^{\pi}(s)$.

The Constrained MDP (CMDP) introduces the cost function, $\mathcal{C}: \mathcal{S} \times \mathcal{A} \rightarrow \mathbb{R}$, and is represented using the tuple $\langle \mathcal{S, A, R, C}, P, \rho, \gamma \rangle$. The allowable policy set $\Pi_{\mathcal{C}} \subset \Pi$ is restricted such that the expected cumulative discounted cost satisfies $\mathcal{J}_{\mathcal{C}}(\pi) \leq d$, where $d$ is the constraint threshold. The value functions, $V_{\mathcal{C}}^{\pi}, Q_{\mathcal{C}}^{\pi}, A_{\mathcal{C}}^{\pi}$ are similarly defined as in MDPs using $\mathcal{C}$. Thus, constrained RL aims to find a policy $\pi \in \Pi_{\mathcal{C}}$, where:
\begin{equation}
    \label{eqn1}
    \arg \max_{\pi} \mathcal{J}_{\mathcal{R}}(\pi) \quad \text{s.t. } \mathcal{J}_{\mathcal{C}}(\pi) \leq d
\end{equation}

A more general consideration is where $\mathcal{C}$ represents a set of $m$ constraints. In those cases, the objective is to find a policy where $\arg \max_{\pi} \mathcal{J}_{\mathcal{R}}(\pi) \quad \text{s.t. } \mathcal{J}_{\mathcal{C}_i}(\pi) \leq d_i$. In the rest of the text we assume the multi-constraint formulation to describe our approach.

\section{Related Works}
\label{sec3}

Constrained RL in model-free settings remains a challenging problem due to the difficulty in converging to near-optimal solutions under high-dimensional approximations and environmental uncertainties. Recent research has introduced various approaches to address these challenges, focusing on effective constraint handling during policy optimization.

The constrained RL objective is often built as a dual problem using Lagrangian relaxation \cite{chow2018risk}, converting episodic constraints into an unconstrained dual objective. RCPO \cite{tessler2018reward} penalizes policy updates based on constraint violations with a Lagrangian multiplier. However, dual optimization adds oscillatory behavior, mitigated in CPPOPID \cite{stooke2020responsive} with a PID controller, or using an Augmented Lagrangian in APPO \cite{dai2023augmented}. While sometimes effective, these methods are very sensitive to Lagrangian parameters, limiting their applicability.

On-policy RL algorithms, that update the existing policy within a trust region, have been extensively used in constrained RL to safely update the existing policy. Algorithms such as TRPO \cite{schulman2015trust} enforce trust-region updates through divergence constraints, while PPO \cite{schulman2017proximal} simplifies this using gradient clipping. Building on TRPO, CPO \cite{achiam2017constrained} integrates second-order approximation of constraints; but suffers from the computational overhead of Fisher Information Matrix inversion. Methods such as PCPO \cite{yang2020projection} and CUP \cite{yang2022constrained} adopts two-phase optimization: policies are updated and then projected back into the feasible space. FOCOPS \cite{zhang2020first} introduces a non-parametric stage for feasible policy generation, followed by parametric projection. However, these projection-based approaches often fail to guarantee optimality, highlighting a critical limitation.

Penalty barrier methods impose a penalty based on the constraint violation, restricting optimization within predefined barriers. IPO \cite{liu2020ipo} uses a logarithmic penalty function; whereas P3O \cite{zhang2022penalized} employs a ReLU-based penalty. Yet, these methods activate penalties only after constraints are breached, leading to an non-smooth transition in learning dynamics. The reward function may not incentivize a safe action. As described in the next section, our work builds on this line of research by introducing a smooth transition from incentives to penalties, facilitating stable policy learning while ensuring efficient adherence to constraints.

\section{Methodology}
\label{sec4}

Policy optimization in reinforcement learning improves a policy iteratively by balancing exploration and exploitation of the current policy's knowledge. The performance difference between two policies is given by the following equation \cite{kakade2002approximately}:

\begin{equation}
    \label{eqn2}
    \mathcal{J}_{\mathcal{R}}(\pi') - \mathcal{J}_{\mathcal{R}}(\pi) = \frac{1}{1 - \gamma} \mathbb{E}_{\substack{s \sim d^{\pi'}\\a \sim \pi'}} \left[ A_{\mathcal{R}}^{\pi}(s, a) \right]
\end{equation}
where $d^{\pi}(s) = (1 - \gamma)\sum_{t=0}^{\infty} \gamma^t P(s_t=s | \pi)$ is the discounted state distribution under policy $\pi$. Extending this to cost functions $\mathcal{C}$, the constrained optimization objective (\ref{eqn1}) can be reformulated as follows:
\begin{equation}
    \label{eqn3}
    \begin{gathered}
    \pi_{k+1} = \arg \max_{\pi} \mathbb{E}_{\substack{s \sim d^{\pi}\\a \sim \pi}} \left[ A_{\mathcal{R}}^{\pi_k}(s, a) \right] \\
    \text{s.t.} \quad \mathcal{J}_{\mathcal{C}_i}(\pi_k) + \frac{1}{1 - \gamma} \mathbb{E}_{\substack{s \sim d^{\pi}\\a \sim \pi}} \left[ A_{\mathcal{C}_i}^{\pi_k}(s, a) \right] \leq d_i
    \end{gathered}
\end{equation}
where $\pi_k$ is the current policy. Given the continuous state space, we use a parametric policy as $\pi(\theta_k) = \pi_k$. To approximate $A_{\mathcal{R}}^{\pi_k}$ and $A_{\mathcal{C}_i}^{\pi_k}$ using samples from the previous policy, $\pi_k$, the importance sampling ratio is used: $r(\theta) = \frac{\pi(\theta)}{\pi(\theta_k)}$. This modifies the optimization problem as follows.
\begin{equation}
    \label{eqn4}
    \begin{gathered}
        \pi_{k+1} = \arg \max_{\pi} \mathbb{E}_{\substack{s \sim d^{\pi_k}\\a \sim \pi_k}} \left[ r(\theta) A_{\mathcal{R}}^{\pi_k}(s, a) \right] \\
    \text{s.t.} \quad \frac{1}{1 - \gamma} \mathbb{E}_{\substack{s \sim d^{\pi_k}\\a \sim \pi_k}} \left[ r(\theta) A_{\mathcal{C}_i}^{\pi_k}(s, a) \right] + \mathcal{J}_{\mathcal{C}_i}(\pi_k) \leq d_i
    \end{gathered}
\end{equation}
To stabilize the policy update within a trust region, the ratio $r(\theta)$ is clipped within $(1 - \epsilon, 1 + \epsilon)$, where $\epsilon$ is the clipping threshold. In penalty function approaches, constraint violations are penalized through barrier functions added to the reward maximization objective. Existing works \cite{liu2020ipo,zhang2022penalized,gao2024exterior} introduce penalties only after constraints are violated, leading to abrupt learning transitions and suboptimal policies due to risk-averse updates.

Our approach addresses these limitations by introducing an \emph{Incrementally Penalized Proximal Policy Optimization (IP3O)} algorithm. The key features of our approach are:
\begin{itemize}
    \item Incentivizing safe behavior: When constraints are satisfied, an incentive is encoded in the penalty barrier function to guide the policy toward constraint satisfaction without abrupt penalties. 
    \item Gradual transition to penalties: For unsafe trajectories, a smooth transition to penalization is incorporated, mitigating sudden changes in the learning dynamics.
\end{itemize}
This formulation ensures stable policy updates, enabling a smoother interplay between reward optimization and constraint satisfaction. In this section, we describe the IP3O algorithm in detail, including its implementation and theoretical guarantees. 

\subsection{Penalty Function Design}

Penalty functions, such as Leaky ReLU, allow negative activation for values below zero, incentivizing constraint satisfaction. However, this has a major drawback; their inability to reduce the slope in the negative region leads to excessive incentives for remaining deeply within the constraint region, which may overshadow rewards and result in overly conservative policies that fail to reach the goal (illustrated by the blue path in Figure \ref{fig1}).

To address this issue, we propose using the Exponential Linear Unit (ELU) function \cite{clevert2015fast} as the penalty function. ELU meets two key requirements:
\begin{inparaitem}
    \item smoothly transitions from incentivizing constraint satisfaction to penalizing violations, ensuring stable policy updates.
    \item mitigates the risk of overly conservative policies by nullifying excessive incentives within the constraint region.
\end{inparaitem}
It is defined using the following function.
\begin{equation}
    \label{eqn5}
    \text{ELU}(x, \alpha) =
    \begin{cases}
        x & \text{if } x \geq 0 \\
        \alpha(\exp(x) - 1) & \text{otherwise}
    \end{cases}
\end{equation}
Here, $\alpha$ determines the stagnation point ($-\alpha$); i.e., if $\alpha = 1$ the function gradually reduces to $-1$ for negative values of $x$ and then stabilizes. So higher values promote safety by favoring constraint satisfaction. Despite its advantages, the ELU function introduces gradient discontinuity at $x=0$ when $\alpha \neq 1$ \cite{barron2017continuously}. To overcome this limitation, we employ the Continuously Differentiable ELU (CELU) function that proposes a modification for negative values of $x$ as: $\text{CELU}(x, \alpha) = \alpha(\exp(x/\alpha) - 1), \text{if } x < 0$. CELU maintains the properties of ELU while ensuring gradient continuity, making it more suitable for policy optimization. A comparison among the activation functions and their gradients are shown in Figure \ref{fig3}.

\begin{figure}[!ht]
    \centering
    \includegraphics[width=0.98\linewidth]{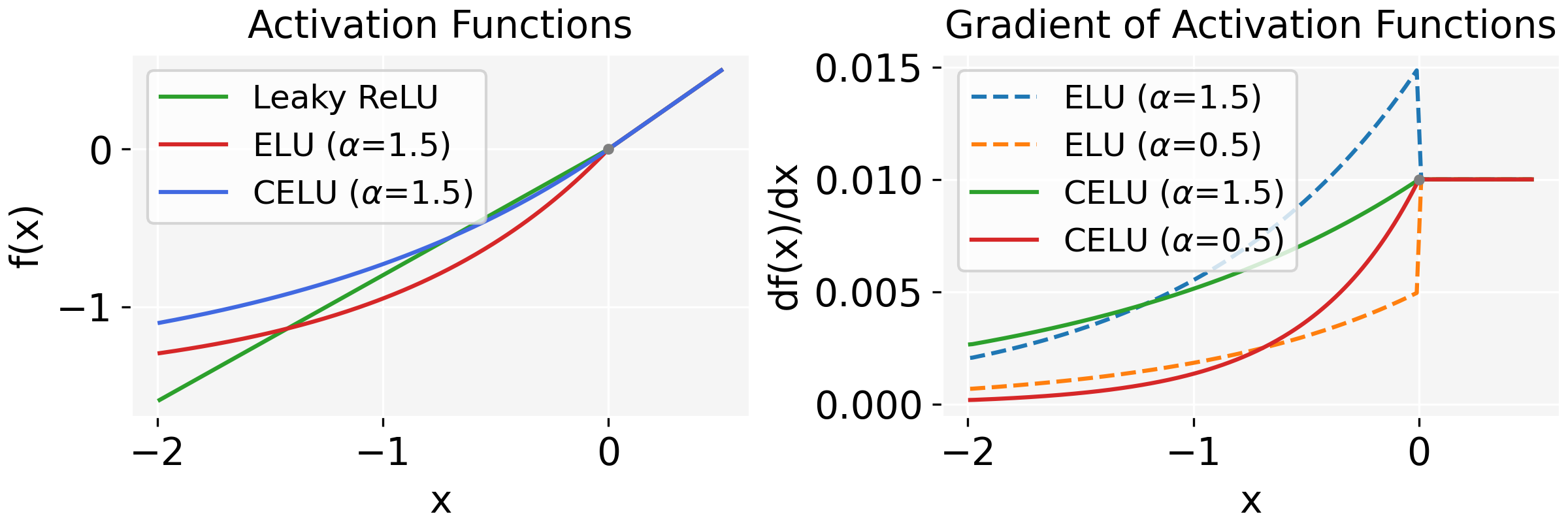}
    \caption{Comaprison of activation functions and their gradients.}
    \label{fig3}
\end{figure}

\subsection{Practical Implementation}

\begin{figure}[!ht]
    \centering
    \includegraphics[width=0.85\linewidth]{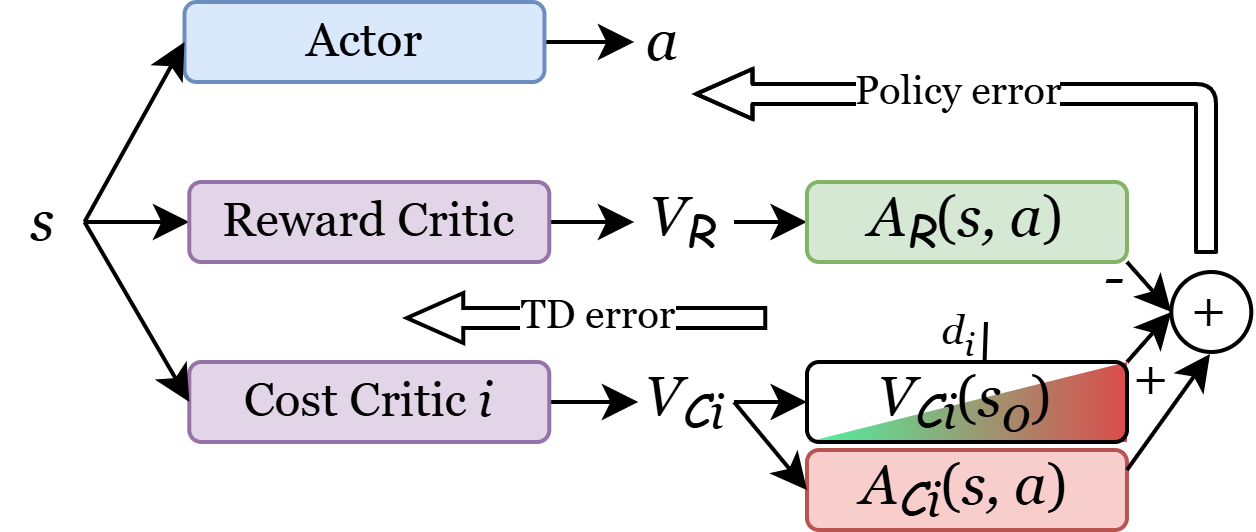}
    \caption{The overall algorithm describing the loss function for training the policy network, with the sources of loss.}
    \label{fig2}
\end{figure}

We integrate the proposed CELU-based penalty function into the cost critic, combining it with the reward critic to guide policy updates. These critics provide gradient signals for optimizing the policy network, as shown in Figure \ref{fig2}. The loss function for the reward is derived from Equation \ref{eqn4} as: $ \mathcal{L}_{\mathcal{R}}(\pi_k) = \mathbb{E}_{\substack{s \sim d^{\pi_k}\\a \sim \pi_k}} [- r'(\theta) A_{\mathcal{R}}^{\pi_k}(s, a)]$, where $r'(\theta) = \min (r(\theta), \text{clip}(r(\theta), 1 - \epsilon, 1 + \epsilon))$. Similarly the loss due to the cost constraint is derived as: $ \mathcal{L}_{\mathcal{C}_i}(\pi_k) = \frac{1}{1 - \gamma} \mathbb{E}_{\substack{s \sim d^{\pi_k}\\a \sim \pi_k}} [ r''(\theta) A_{\mathcal{C}_i}^{\pi_k}(s, a)] + (\mathcal{J}_{\mathcal{C}_i}(\pi_k) - d_i)$, where $r''(\theta) = \max (r(\theta), \text{clip}(r(\theta), 1 - \epsilon, 1 + \epsilon))$. The combined loss function is stated as follows.
\begin{equation}
    \label{eqn6}
    \mathcal{L}(\pi_k) = \mathcal{L}_{\mathcal{R}}(\pi_k) + \eta \sum_{i=1}^m \text{CELU}( \mathcal{L}_{\mathcal{C}_i}(\pi_k))
\end{equation}
where $\eta > 0$ is the penalty factor. This formulation ensures a smooth transition from incentivizing constraint satisfaction to penalizing unsafe behaviors. Since the CELU function uses an exponential term, the gradient of the loss arising from CELU never reaches 0. To ensure no policy updates owing to residual gradients we can use a small positive number $h < \alpha$, such that $\max ( \text{CELU}( \mathcal{L}_{\mathcal{C}_i}(\pi_k)), -\alpha(1 - h) )$ instead of directly using the CELU function in Equation \ref{eqn6}. However in practical applications, we did not require this. To ensure equivalence between the solutions of the proposed loss function (\ref{eqn6}) and the constrained optimization objective (\ref{eqn4}), we establish the following theorem. This result guarantees that the proposed approach preserves the optimality of the original problem while promoting stable and robust policy updates. Hyper-parameter settings, including $\eta$ and $\alpha$ are detailed in the Supplementary Material.

\begin{theorem}
    Given a sequence of policies $\{ \pi_k \}$ obtained by minimizing $\mathcal{L}(\pi_k)$ and considering Slater's condition for strong duality, let $\lambda^*$ be the Lagrange multiplier for the optimal solution of (\ref{eqn4}). If $\eta$ satisfies $\eta \geq || \lambda^* ||_{\infty}$, the limit $\pi^*$ of $\{ \pi_k \}$ is also a solution to (\ref{eqn4}).
\end{theorem}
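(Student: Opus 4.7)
The plan is to apply an exact-penalty-function argument that relates the CELU-penalized loss $\mathcal{L}(\pi)$ to the Lagrangian of~(\ref{eqn4}). First I would invoke Slater's condition to obtain strong duality and a KKT pair $(\pi^{\dagger}, \lambda^*)$ for~(\ref{eqn4}), yielding the Lagrangian inequality $\mathcal{L}_{\mathcal{R}}(\pi) + \sum_{i=1}^{m} \lambda^*_i \mathcal{L}_{\mathcal{C}_i}(\pi) \geq \mathcal{L}_{\mathcal{R}}(\pi^{\dagger})$ for every policy $\pi$, together with the complementary-slackness relations $\lambda^*_i \mathcal{L}_{\mathcal{C}_i}(\pi^{\dagger}) = 0$.

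The central technical step is an inequality that bridges $\text{CELU}$ and a linear penalty. Since $e^{y} \geq 1 + y$ for all $y$, one has $\text{CELU}(x, \alpha) \geq x$ for all $x$, with equality when $x \geq 0$. Applied termwise, this yields
\begin{equation*}
\mathcal{L}(\pi) \;\geq\; \mathcal{L}_{\mathcal{R}}(\pi) + \eta \sum_{i=1}^{m} \mathcal{L}_{\mathcal{C}_i}(\pi) \;\geq\; \mathcal{L}_{\mathcal{R}}(\pi^{\dagger}) + \sum_{i=1}^{m} (\eta - \lambda^*_i)\, \mathcal{L}_{\mathcal{C}_i}(\pi),
\end{equation*}
where the second inequality uses the Lagrangian bound. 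The hypothesis $\eta \geq \|\lambda^*\|_\infty$ makes every coefficient $\eta - \lambda^*_i$ non-negative, so whenever some constraint is violated the right-hand side strictly exceeds $\mathcal{L}_{\mathcal{R}}(\pi^{\dagger})$. Simultaneously, evaluating at the feasible $\pi^{\dagger}$ gives $\mathcal{L}(\pi^{\dagger}) \leq \mathcal{L}_{\mathcal{R}}(\pi^{\dagger})$, since $\text{CELU}(\cdot) \leq 0$ on the feasible side. Because $\{\pi_k\}$ is a minimizing sequence, its limit $\pi^*$ satisfies $\mathcal{L}(\pi^*) \leq \mathcal{L}(\pi^{\dagger})$, which together with the lower bound above forces each $\mathcal{L}_{\mathcal{C}_i}(\pi^*) \leq 0$; that is, $\pi^*$ is feasible. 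Combining feasibility with the Lagrangian inequality at $\pi^*$ and complementary slackness at $\pi^{\dagger}$ then yields $\mathcal{L}_{\mathcal{R}}(\pi^*) = \mathcal{L}_{\mathcal{R}}(\pi^{\dagger})$, so $\pi^*$ solves~(\ref{eqn4}).

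The main obstacle I expect is the incentive region of $\text{CELU}$, where $\mathcal{L}_{\mathcal{C}_i}(\pi) < 0$ contributes a strictly negative penalty that could in principle bias $\pi^*$ strictly into the interior of the feasible set, away from $\pi^{\dagger}$. The bound $\text{CELU}(x, \alpha) \geq -\alpha$ caps the per-constraint incentive at $\eta \alpha$, but closing the argument requires showing that this bounded incentive cannot produce a spurious improvement over the KKT value. The cleanest route I foresee is to supplement the chain above with a first-order stationarity check at the limit: differentiating $\mathcal{L}$ identifies implicit multipliers $\lambda_i = \eta \cdot \partial_x \text{CELU}(\mathcal{L}_{\mathcal{C}_i}(\pi^*)) \in [0, \eta]$, and verifying that this collection, together with feasibility just established, satisfies the KKT system of~(\ref{eqn4}) will complete the proof.
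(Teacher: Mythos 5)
Your overall strategy --- exact penalty via the KKT pair $(\pi^{\dagger},\lambda^*)$ together with the bound $\text{CELU}(x,\alpha)\ge x$ --- is essentially the approach the paper takes (its supplementary Lemmas 2 and 3 run the two directions of the equivalence), but your argument has two genuine gaps, one of which you have correctly diagnosed yourself and which is, in fact, also unaddressed in the paper's own proof. First, the feasibility step fails for $m>1$: combining $\mathcal{L}(\pi^*)\le\mathcal{L}(\pi^{\dagger})\le\mathcal{L}_{\mathcal{R}}(\pi^{\dagger})$ with your lower bound yields only the single scalar inequality $\sum_{i}(\eta-\lambda_i^*)\,\mathcal{L}_{\mathcal{C}_i}(\pi^*)\le 0$ on a signed sum; a violated constraint ($\mathcal{L}_{\mathcal{C}_i}(\pi^*)>0$) can be offset by another constraint satisfied with slack ($\mathcal{L}_{\mathcal{C}_j}(\pi^*)<0$), and if $\eta=\lambda_i^*$ the violated term contributes nothing. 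So ``forces each $\mathcal{L}_{\mathcal{C}_i}(\pi^*)\le 0$'' does not follow, and the claim that a violation makes the right-hand side \emph{strictly} exceed $\mathcal{L}_{\mathcal{R}}(\pi^{\dagger})$ is false in the multi-constraint setting the paper adopts.

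Second --- and this is exactly the obstacle you name at the end --- neither your chain of inequalities nor the proposed stationarity check closes the optimality step. Even granting feasibility, your inequalities give only $\mathcal{L}_{\mathcal{R}}(\pi^*)\le\mathcal{L}_{\mathcal{R}}(\pi^{\dagger})-\eta\sum_i\text{CELU}(\mathcal{L}_{\mathcal{C}_i}(\pi^*))\le\mathcal{L}_{\mathcal{R}}(\pi^{\dagger})+\eta m\alpha$, i.e.\ optimality only up to the incentive budget $\eta m\alpha$; and the stationarity route produces implicit multipliers $\lambda_i=\eta\,\text{CELU}'(\mathcal{L}_{\mathcal{C}_i}(\pi^*))$ that are \emph{strictly} positive everywhere (the derivative of CELU never vanishes), so complementary slackness fails at any inactive constraint: the limit is a KKT point of a perturbed problem, not of (4). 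This is precisely the residual-gradient phenomenon the paper itself concedes in the Practical Implementation section, and it resurfaces as the $|\alpha\log(h)|$ term in Theorem 2. For what it is worth, the paper's own Lemma 2 hides the same issue: in the case $\mathcal{L}_{\mathcal{C}_i}<0$ it asserts $\eta\,\text{CELU}(\mathcal{L}_{\mathcal{C}_i})\ge\lambda_i^*\mathcal{L}_{\mathcal{C}_i}$, justified only by $\text{CELU}(x)\ge-\alpha$, which is false in general (take $\eta=10$, $\lambda_i^*=1$, $\alpha=1$, $x=-0.1$: then $10(e^{-0.1}-1)\approx-0.95<-0.1$). Your diagnosis of where the difficulty lies is therefore correct; what is missing --- from both your proposal and the paper --- is an argument that actually neutralizes the incentive region, and without an additional device (e.g.\ truncating the penalty at $0$ from below, or sending $\alpha\to 0$) the exactness claim does not go through as stated.
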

\begin{proof}
    Provided in the Supplementary Material.
\end{proof}

The value evaluation approximation using the current policy $\pi_k$, instead of the currently evaluating policy, $\pi$, as done in the original problem; introduces a bias \cite{jiang2016doubly}. Moreover the approximation of the loss due to the cost function used may induce a penalty on the optimal reward resulting from the optimization of the original problem in Equation \ref{eqn4}. Using the following theorem we establish the worst-case error bound for these approximations.

\begin{theorem}
    If the loss function $\mathcal{L}(\pi_k)$ is optimized instead of the original problem (\ref{eqn4}), the upper bound on the error is given by the following.
    \begin{equation}
        \frac{\sqrt{2 \delta} \gamma \varepsilon_{\mathcal{R}}^{\pi}}{1 - \gamma} + \eta \sum_{i = 1}^m \left[ \frac{\sqrt{2 \delta} \gamma \varepsilon_{\mathcal{C}_i}^{\pi}}{1 - \gamma} + | \alpha \log(h) | \right]
    \end{equation}
    where $\varepsilon_{\mathcal{R}}^{\pi} = \max_s| \mathbb{E}_{a \sim \pi}[A_{\mathcal{R}}^{\pi_k}(s, a)] |$, $\varepsilon_{\mathcal{C}_i}^{\pi} = \max_s| \mathbb{E}_{a \sim \pi}[A_{\mathcal{C}_i}^{\pi_k}(s, a)] |$, and $\delta = \mathbb{E}_{s \sim d^{\pi_k}}[ D_{KL}(\pi || \pi_k)[s] ]$. 
\end{theorem}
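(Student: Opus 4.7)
\medskip
\noindent\emph{Proof plan.} The plan is to decompose the gap between optimizing $\mathcal{L}(\pi_k)$ and the original problem (\ref{eqn4}) into two independent sources of error and bound each term separately. The first source is the distributional shift incurred by evaluating advantages under $d^{\pi_k}$ rather than $d^{\pi}$ (the ``importance sampling bias''); this governs both the reward term and each cost term. The second source is the saturation error introduced by the CELU surrogate: because the stabilized penalty is clipped to $-\alpha(1-h)$ from below, deeply safe states contribute a bounded residual. I will then assemble the two error components with the weight $\eta$ used in (\ref{eqn6}).

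\medskip
\noindent\emph{Step 1: reward approximation error.} Starting from the performance difference identity (\ref{eqn2}), the true reward objective $\mathcal{J}_{\mathcal{R}}(\pi)$ differs from the surrogate $\mathcal{L}_{\mathcal{R}}(\pi_k)$ only in that the state distribution has been replaced by $d^{\pi_k}$. I would bound this replacement in the spirit of Achiam et al.~(2017): write the difference as $\frac{1}{1-\gamma}\mathbb{E}_{s\sim d^{\pi}, a\sim\pi}[A_{\mathcal{R}}^{\pi_k}]-\frac{1}{1-\gamma}\mathbb{E}_{s\sim d^{\pi_k}, a\sim\pi}[A_{\mathcal{R}}^{\pi_k}]$, upper bound it by $\frac{1}{1-\gamma}\varepsilon_{\mathcal{R}}^{\pi}\cdot D_{TV}(d^{\pi}\|d^{\pi_k})$, and then invoke Pinsker's inequality $D_{TV}(\pi\|\pi_k)[s] \le \sqrt{\tfrac{1}{2}D_{KL}(\pi\|\pi_k)[s]}$ together with Jensen on the expectation of the square root. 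The discount factor $\gamma$ appears in the trust-region inflation of $D_{TV}$ in the state-marginal, yielding the first summand $\tfrac{\sqrt{2\delta}\,\gamma\,\varepsilon_{\mathcal{R}}^{\pi}}{1-\gamma}$.

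\medskip
\noindent\emph{Step 2: per-constraint cost error.} An identical argument applied to each cost channel, with $A_{\mathcal{C}_i}^{\pi_k}$ in place of $A_{\mathcal{R}}^{\pi_k}$ and $\varepsilon_{\mathcal{C}_i}^{\pi}$ as the corresponding advantage magnitude, bounds the surrogate-versus-true cost gap by $\tfrac{\sqrt{2\delta}\,\gamma\,\varepsilon_{\mathcal{C}_i}^{\pi}}{1-\gamma}$. I must then account for the additional error induced by replacing the cost violation term with $\max(\text{CELU}(\mathcal{L}_{\mathcal{C}_i}(\pi_k)),-\alpha(1-h))$. Solving $\alpha(\exp(x/\alpha)-1)=-\alpha(1-h)$ gives the saturation threshold $x=\alpha\log h$. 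For any loss value below this threshold the clipped penalty is flat, so the surrogate under-reports the true (unbounded) safety margin by at most $|\alpha\log(h)|$, which enters the bound as the second summand inside the cost sum.

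\medskip
\noindent\emph{Step 3: combination and main obstacle.} Summing over $i$, weighting by $\eta$ as in (\ref{eqn6}), and combining with the reward error yields the stated expression. The main technical obstacle is the trust-region argument in Step 1: one must show that the single-step conditional KL bound $\delta$ transfers, via a telescoping / state-visitation coupling and Pinsker, into a total-variation bound on the marginals $d^{\pi}$ and $d^{\pi_k}$ without accumulating an extra factor of $1/(1-\gamma)$ beyond what appears in the theorem. A secondary subtlety is that the CELU saturation bound of Step 2 is a one-sided error (only deeply-safe states are affected); I would justify that this one-sided slack is what matters for upper-bounding the objective loss, since excess safety margin cannot improve the original constrained optimum of (\ref{eqn4}).
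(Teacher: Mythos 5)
Your proposal is correct and follows essentially the same route as the paper's proof: both invoke the Achiam et al.\ (2017) surrogate performance-difference bound (Pinsker-based trust-region argument) to obtain the $\sqrt{2\delta}\,\gamma\,\varepsilon/(1-\gamma)$ terms for the reward and each cost channel, derive the saturation threshold $\mathcal{L}_{\mathcal{C}_i}=\alpha\log h$ by solving $\alpha(\exp(x/\alpha)-1)=-\alpha(1-h)$ to get the $|\alpha\log(h)|$ residual, and combine the pieces with weight $\eta$. The only cosmetic difference is that the paper makes the triangle-inequality decomposition of $|\mathcal{L}(\hat{\pi})-\mathcal{L}(\pi_K)|$ explicit and uses the $1$-Lipschitzness of CELU to reduce $|\mathrm{CELU}(\mathcal{L}_{\mathcal{C}_i}(\hat{\pi}))-\mathrm{CELU}(\mathcal{L}_{\mathcal{C}_i}(\pi_K))|$ to $|\mathcal{L}_{\mathcal{C}_i}(\hat{\pi})-\mathcal{L}_{\mathcal{C}_i}(\pi_K)|$, a step you leave implicit.
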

\begin{proof}
    Provided in the Supplementary Material.
\end{proof}

In Algorithm \ref{algo1} below we outline the pseudocode for policy updation using our penalty function.


\begin{algorithm}[!htbp]
    \caption{Policy optimization using IP3O}
    \label{algo1}
    \textbf{Input:} Initial policy $\pi_0$, initial value function $V_{\mathcal{R}}^{\pi_0}$, initial cost value function/s $V_{\mathcal{C}_i}^{\pi_0}$ \;
    \begin{algorithmic}[1]
        \FOR{$k=0, ..., K-1$}
            \STATE Sample training batch $\mathcal{D}_k = \{\tau_1, ..., \tau_N\}$ consisting of $N$ trajectories using $\pi_k$\;
            \STATE Compute $V_{\mathcal{R}}^{\pi_k}, V_{\mathcal{C}_i}^{\pi_k}$ for trajectories in $\mathcal{D}_k$\;
            \STATE \# Advantage calculation \;
            \STATE Compute $A_{\mathcal{R}}^{\pi_k}(s, a) = Q_{\mathcal{R}}^{\pi_k}(s, a) - V_{\mathcal{R}}^{\pi_k}(s)$ \;
            \STATE Compute $A_{\mathcal{C}_i}^{\pi_k}(s, a) = Q_{\mathcal{C}_i}^{\pi_k}(s, a) - V_{\mathcal{C}_i}^{\pi_k}(s)$ \;
            \STATE Update: $V_{\mathcal{R}}^{\pi_k}(s), V_{\mathcal{C}_i}^{\pi_k}(s) \rightarrow V_{\mathcal{R}}^{\pi_{k+1}}(s), V_{\mathcal{C}_i}^{\pi_{k+1}}(s)$ \;
            \STATE \# Policy update \;
            \FOR{$t=0, ..., T-1$}
                \STATE Compute $\mathcal{L}_{\mathcal{R}}(\pi_k) = \mathbb{E}_{\substack{s \sim d^{\pi_k}\\a \sim \pi_k}} [- r'(\theta) A_{\mathcal{R}}^{\pi_k}(s, a)]$ \;
                \STATE Compute $\mathcal{L}_{\mathcal{C}_i}(\pi_k) = \frac{1}{1 - \gamma} \mathbb{E}_{\substack{s \sim d^{\pi_k}\\a \sim \pi_k}} [ r''(\theta) A_{\mathcal{C}_i}^{\pi_k}(s, a)] + (\mathcal{J}_{\mathcal{C}_i}(\pi_k) - d_i)$ \;
                \STATE Compute $\mathcal{L}(\pi_k)$ using Equation \ref{eqn6} \;
                \STATE $\pi = \pi_k + \omega \cdot \nabla \mathcal{L}(\pi_k)$
                \STATE \# Trust region criterion (Gradient clipping) \;
                \IF{$\mathbb{E}_{s \sim d^{\pi_k}}[D_{KL}(\pi || \pi_k)[s]] \notin [\delta^{-}, \delta^{+}]$}
                    \STATE break \;
                \ENDIF
            \ENDFOR
            \STATE Update $\pi_k \rightarrow \pi_{k+1}$ \;
        \ENDFOR
    \end{algorithmic}
    \textbf{Return:} Trained policy $\pi_K$ \;
\end{algorithm}

The output of the algorithm is the final policy. Here $\omega$ is the learning rate. For simplicity we have shown trust region updates using the KL divergence criterion, but in practice we use the PPO updates through gradient clipping \cite{schulman2017proximal}. In the next section we present the empirical details to demonstrate the efficacy of our approach on some benchmark environments in safe RL.

\begin{figure*}
    \centering
    \includegraphics[width=.98\linewidth]{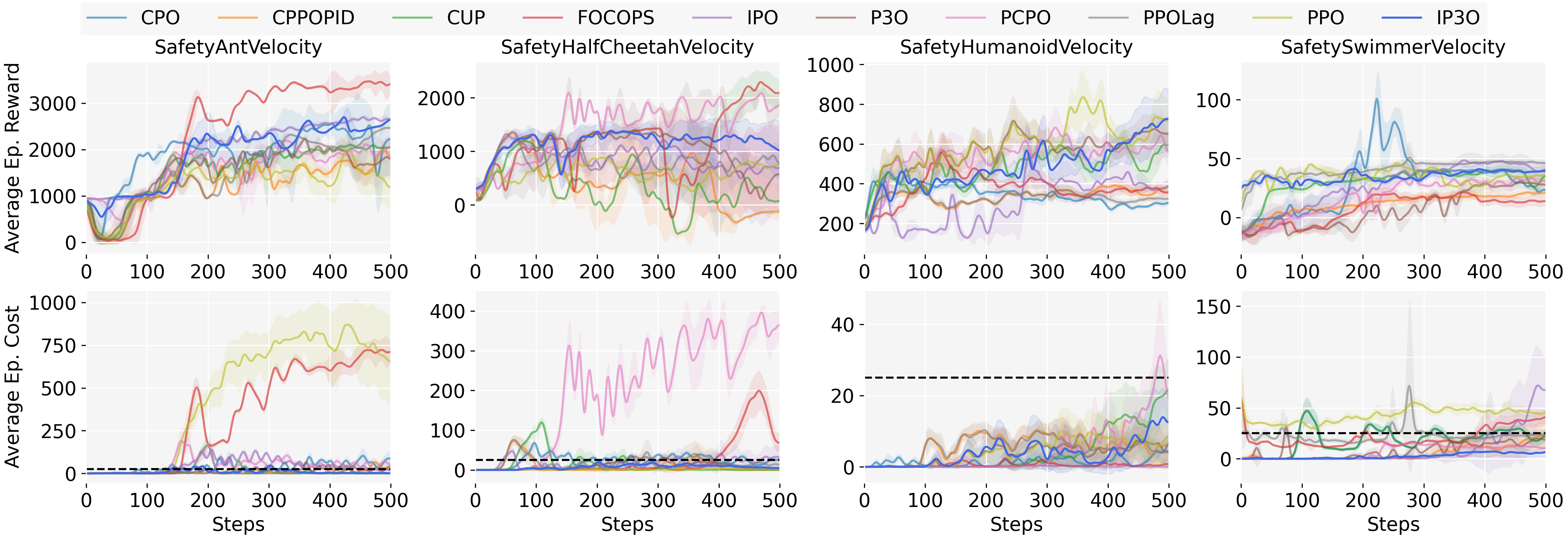}
    \caption{Comparison with the baselines using the MuJoCo safety velocity scenarios. Our method is marked as IP3O (blue). The dashed line indicates the cost constraint limit.}
    \label{fig_res1}
\end{figure*}

\section{Experiments}
\label{sec5}

In this section, we present the evaluation results of our proposed algorithm on benchmark safe RL environments and compare its performance with state-of-the-art approaches. For benchmarking, we consider the following methods.
\begin{itemize}
    \item First order methods such as CUP \cite{yang2022constrained} and FOCOPS \cite{zhang2020first}.
    \item Lagrangian methods such as CPPOPID \cite{stooke2020responsive}, and PPO \cite{schulman2017proximal} with a Lagrangian multiplier.
    \item Second order methods such as CPO \cite{achiam2017constrained}, and PCPO \cite{yang2020projection}.
    \item Penalty function methods such as IPO \cite{liu2020ipo}, and P3O \cite{zhang2022penalized}.
\end{itemize}
All the above algorithms use an on-policy buffer for learning. Additionally, we include a comparison with the vanilla PPO algorithm \cite{schulman2017proximal} as a baseline. All baseline implementations are adapted from the \href{https://github.com/PKU-Alignment/omnisafe}{OmniSafe} repository \cite{ji2024omnisafe}. The evaluations were conducted across three widely-used environments: MuJoCo Safety Velocity \cite{ji2023safety}, Safety Gymnasium \cite{ray2019benchmarking}, and Bullet Safety Gymnasium \cite{gronauer2022bullet}. These environments provide diverse challenges that test the agent’s ability to maximize cumulative rewards while adhering to predefined safety constraints, as formulated in Equation \ref{eqn1}. We also evaluate our approach on multi-agent environments, using the MetaDrive simulator \cite{li2022metadrive}, which are detailed later. Our approach allows tuning of the safety level via the hyper-parameter $\alpha$, which is analyzed further in the Ablation Studies section.

\begin{figure*}[!ht]
    \centering
    \includegraphics[width=.98\linewidth]{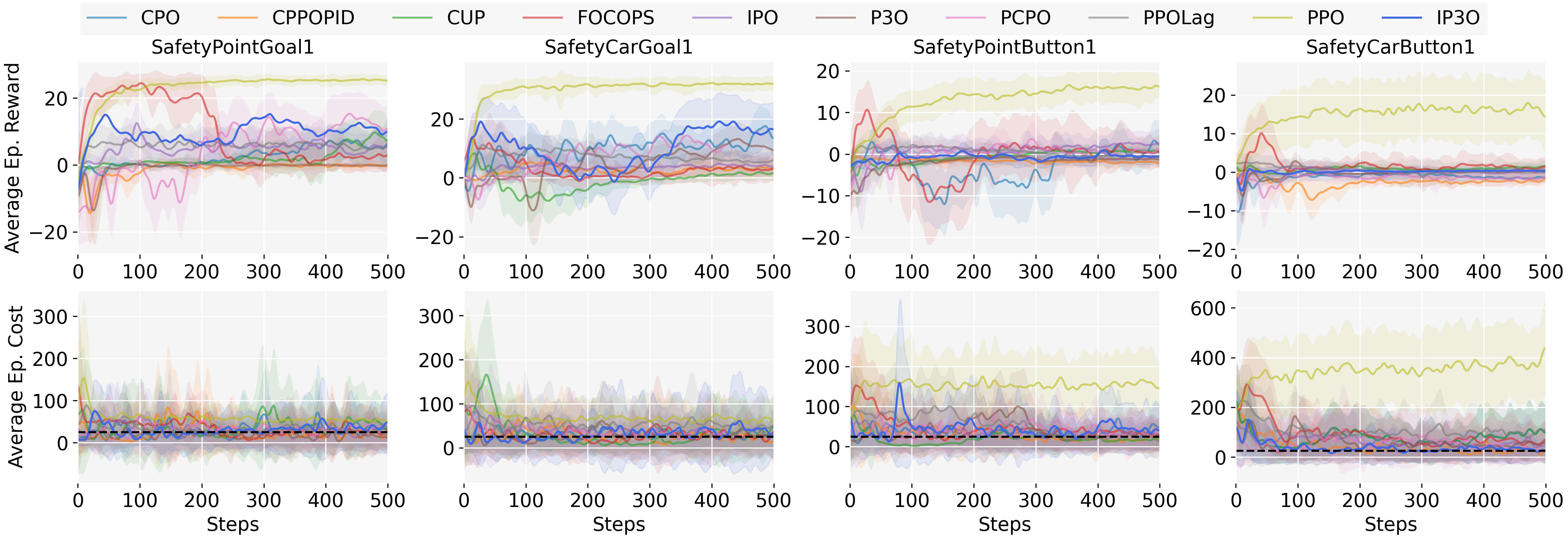}
    \caption{Comparison with the baselines using the Safety Gymnasium scenarios. Our method is marked as IP3O (blue). The dashed line indicates the cost constraint limit.}
    \label{fig_res2}
\end{figure*}

\begin{figure*}
    \centering
    \includegraphics[width=.98\linewidth]{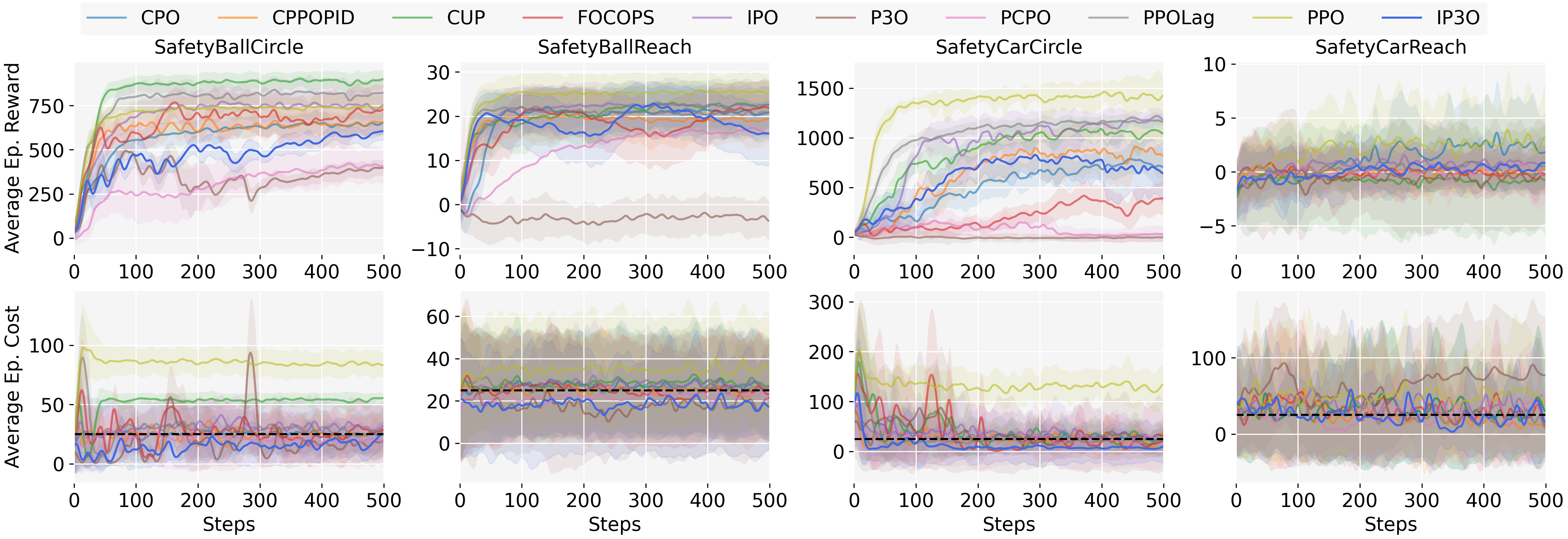}
    \caption{Comparison with the baselines using the Bullet Safety Gymnasium scenarios. Our method is marked as IP3O (blue). The dashed line indicates the cost constraint limit.}
    \label{fig_res3}
\end{figure*}

\subsection{MuJoCo Safety Velocity}

The Safety Velocity environments simulate autonomous robots using MuJoCo physics simulator. The objective is to control the agent to run as fast as possible while ensuring compliance with the velocity constraints of the robot. We evaluate our approach on the Ant, Half-Cheetah, Humanoid, and Swimmer robots. Each environment has varying episode lengths. For these experiments, we set the constraint limit to 25 and use $\alpha = 0.5$. Detailed environment descriptions are provided in the Supplementary Material.

As shown in Figure \ref{fig_res1}, IP3O obtains better returns while maintaining strict compliance with the velocity constraints compared to state-of-the-art algorithms. Algorithms such as FOCOPS and PCPO shows better returns compared to IP3O for the Ant and HalfCheetah environments, but struggle to find a feasible policy. Other penalty based algorithms, such as IPO and P3O demonstrate sub-optimal returns owing to sharp gradient change near the constraint boundary. This demonstrates that our approach balances safety and performance effectively in dynamic control tasks.

\subsection{Safety Gymnasium}

Safety Gymnasium environments present the challenge of navigating the agent towards goal state using RL policy in continuous state-space environments. The agent needs to avoid unsafe states such as pre-defined hazardous regions, or avoiding interactions with certain unsafe moving objects; while navigating. Generally the hazards and goals are detected using lidar signals returned by the environments at each step, that are part of the observation space for the agents. The rewards can be maximized by going towards the goal states, while going away from them incurs negative rewards. We evaluate our method in the Goal and Button tasks, where each episode runs for 1,000 time-steps. For these experiments, the constraint limit is 25, and we set $\alpha = 0.1$.

Figure \ref{fig_res2} shows that our algorithm consistently outperforms baseline methods in adhering to safety constraints within the policy space $\Pi_\mathcal{C}$ while achieving competitive or higher cumulative rewards. This highlights the robustness of our approach in navigation tasks with complex safety requirements. Algorithms such as PCPO and IPO obtains better returns but fail to find a safety compliant policy. Owing to lower value for $\alpha$, the learnt policy frequently shows constraint violations, but obtains much better rewards, especially in the Goal tasks.

\subsection{Bullet Safety Gymnasium}

The Bullet Safety Gymnasium environments consist of situations similar to Safety Gymnasium, and feature tasks such as circular movement near boundaries or gathering objects, using robots like Ball and Car, while staying within constraint barriers. Observation spaces include sensor data for nearby obstacles and task-specific information, such as distances to goals. These environments test safety-critical behaviors in constrained settings. For our evaluations, we set $\alpha = 1.0$ and a constraint limit of 25 across all scenarios. Detailed descriptions of the tasks are available in the Supplementary Material.

From the results in Figure \ref{fig_res3}, it is evident that IP3O achieves the best compliance with safety constraints among other methods; except in the CarReach environment, the policy demonstrates negligible constraint violation. However, this slightly reduces reward maximization, demonstrating that our algorithm prioritizes constraint satisfaction when necessary. Most of the algorithms struggle to find a consistently safe policy owing to oscillations in the loss function near the constraint boundary.

The constraint violations for all the above scenarios are summarized in Figure \ref{fig_summ1}. From the figure it can be seen that IP3O has the lowest cost violations overall, compared to the baselines for the MuJoCo velocity and Bullet Safety Gym environments; and marginally above the best performance in Safety Gym, since $\alpha=0.1$.

\begin{figure}[!ht]
    \centering
    \includegraphics[width=0.98\linewidth]{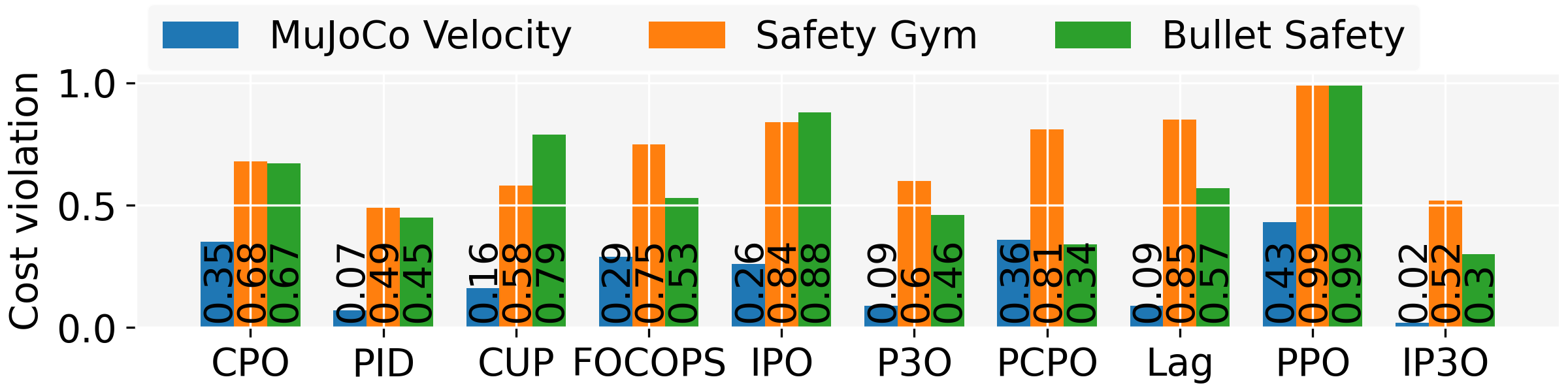}
    \caption{Summary of constraint violations across all environments.}
    \label{fig_summ1}
\end{figure}

\subsection{Multi-Agent Scenarios}

\begin{figure}
    \centering
    \includegraphics[width=0.98\linewidth]{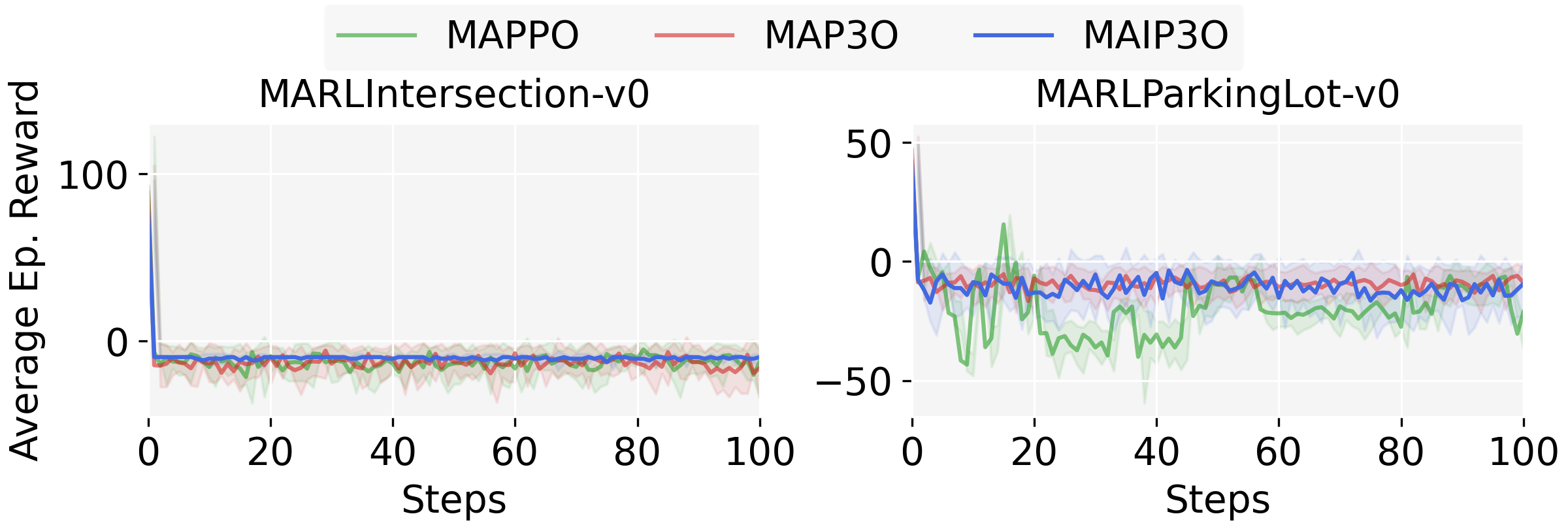}
    \caption{Evaluation results on the MetaDrive simulator}
    \label{fig_res4}
\end{figure}

The safety objective solved here is very relevant in the multi-agent automotive domain that consist of innate safety aspects. Our proposed approach is extendable to multi-agent scenarios. To demonstrate this, we evaluate it on cooperative driving tasks within the MetaDrive simulator \cite{li2022metadrive}, a lightweight and realistic platform designed for multi-agent decentralized reward settings. To train the policy using a cooperative feedback, the value error was estimated using the summation of the decentralized rewards. MetaDrive provides challenging environments for controlling multiple vehicles in predefined driving scenarios, where agents must cooperate to maximize rewards by avoiding safety violations such as crashes or going off-road. Additional environment details are provided in the Supplementary Material.


Training spanned 2000 episodes, using an on-policy replay buffer. We benchmarked our approach against multi-agent versions of PPO, namely MAPPO \cite{yu2022surprising} and MAP3O \cite{zhang2022penalized}. Each trained policy was evaluated over 10 episodes per evaluation step. As shown in Figure \ref{fig_res4}, our algorithm, marked as MAIP3O (Multi-Agent IP3O), achieves comparable performance to baseline algorithms in terms of collective reward across all agents (vehicles). We focus on reward maximization as the primary objective, with safety integrated to facilitate reward optimization.


In the following sub-section we discuss the effect of some of our hyper-parameters on the outcome with respect to return and constraint satisfaction.

\subsection{Ablation Studies}

We conduct ablation studies to evaluate the effect of hyper-parameters on the performance of our approach, focusing on two key parameters: the $\alpha$ hyper-parameter, and the cost limit $d$. These experiments are carried out across different environments to provide a comprehensive understanding of how variations in these hyper-parameters impact performance.

\begin{figure}[!ht]
    \centering
    \includegraphics[width=0.98\linewidth]{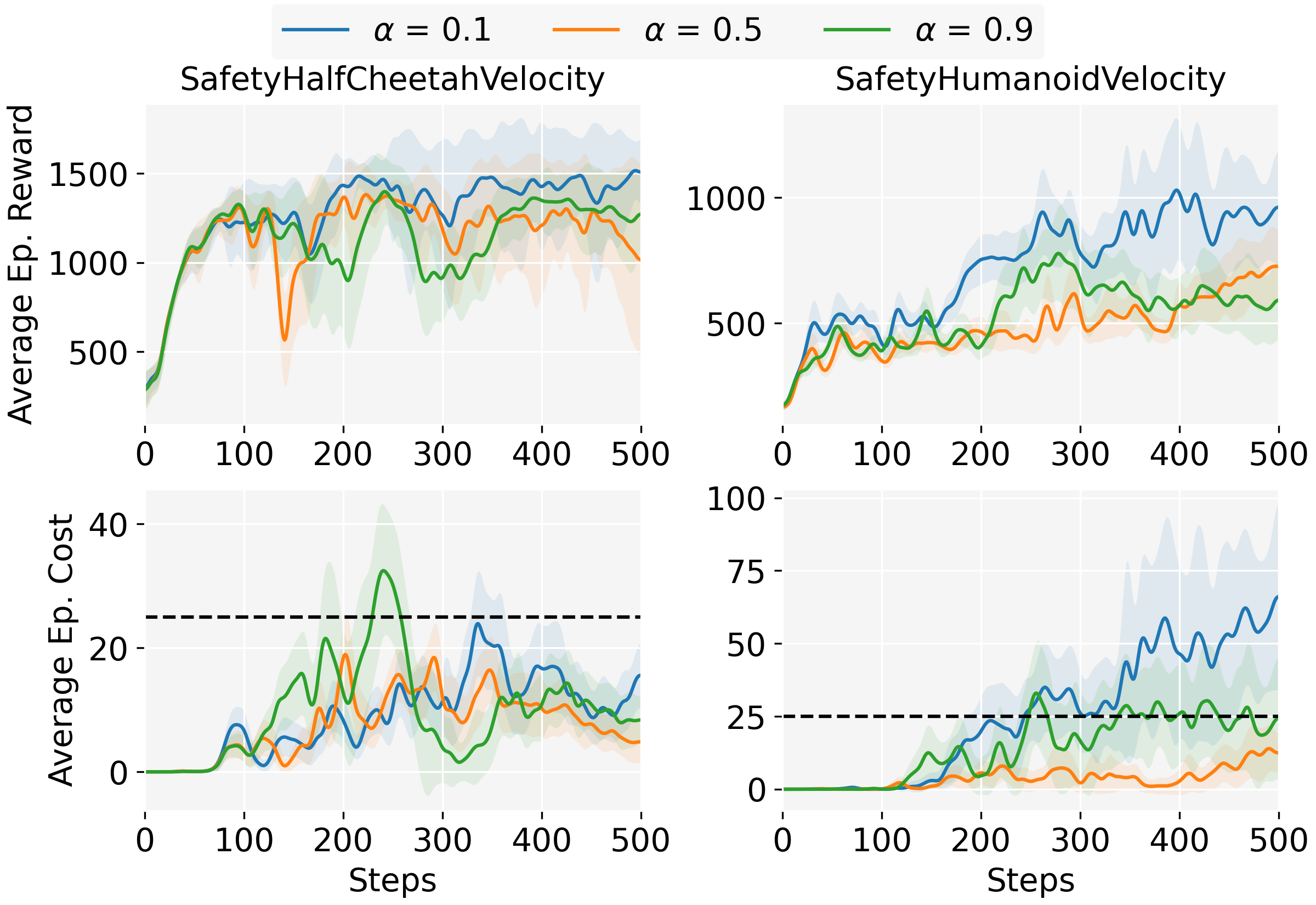}
    \caption{Ablation experiments using $\alpha$ on the Half-Cheetah and the Humanoid environments.}
    \label{fig_res5}
\end{figure}

The effect of $\alpha$ is studied in the Half-Cheetah and Humanoid tasks from the MuJoCo Safety Velocity benchmark. For these experiments, the cost limit $d$ is fixed at 25. As shown in Figure \ref{fig_res5}, setting $\alpha = 0.5$ achieves optimal constraint satisfaction while maintaining a reasonable reward. Reducing $\alpha$ to 0.1, although resulting in higher rewards, leads to frequent constraint violations, as shown in Figure \ref{fig_res5}. On the other hand, increasing $\alpha$ beyond 0.5 does not significantly improve constraint satisfaction in environments where reward and constraints are intrinsically tied to velocity. In scenarios with tight safety constraints, prioritizing constraint satisfaction with higher $\alpha$ values is beneficial. However, in less safety-critical tasks, relaxing the safety weight may allow for better reward maximization.

\begin{figure}[!ht]
    \centering
    \includegraphics[width=0.98\linewidth]{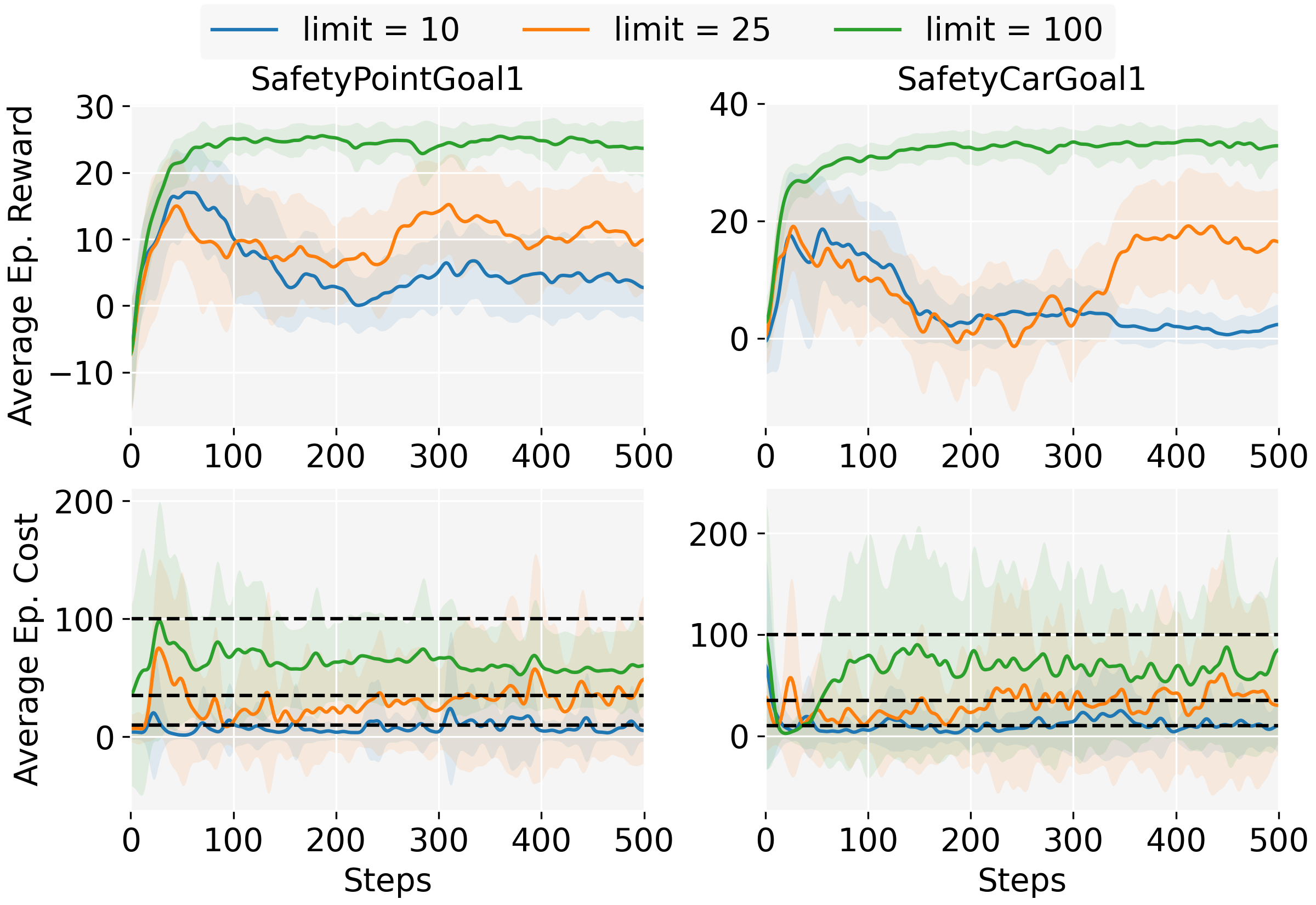}
    \caption{Ablation experiments using the cost limit $d$ hyper-parameter on the PointGoal1 and the CarGoal1 environments.}
    \label{fig_res6}
\end{figure}

In addition, we also analyze the robustness of IP3O across varying cost limit levels d in the PointGoal1 and CarGoal1 environments from the Safety Gymnasium benchmark, to better highlight the effect of cost limit on the policy learning. The results, presented in Figure \ref{fig_res6}, demonstrate that our approach effectively learns constraint satisfying policies, $\pi \in \Pi_{\mathcal{C}}$, for a range of $d$ values while maximizing rewards. This highlights the flexibility of our method in adapting to diverse safety requirements. Overall, these results underline the importance of careful tuning of hyper-parameters to achieve the desired balance between safety and reward.

\section{Conclusion}
\label{sec6}

In this paper, we introduced Incrementally Penalized Proximal Policy Optimization (IP3O), a novel policy optimization algorithm designed to solve safe RL tasks for combining constraint satisfaction with reward maximization. IP3O demonstrates adaptability to diverse safety requirements. Our experiments demonstrated the efficacy of IP3O across benchmark environments in safe RL, such as MuJoCo Safety Velocity, Safety Gymnasium, and Bullet Safety Gymnasium. The results highlight its superior ability to balance safety and reward compared to state-of-the-art approaches. Additionally, the algorithm’s scalability to multi-agent settings, as shown in MetaDrive scenarios, shows its potential for real-world applications like autonomous driving and robotics.

IP3O can be tailored to different safety-critical tasks as shown in the ablation studies that validate the impact of key hyper-parameters such as $\alpha$ and constraint limits. These findings demonstrate the algorithm’s practicality for safe RL tasks. In conclusion, IP3O provides a framework for advancing safe RL research through practical applications and theoretical guarantees. Future work could explore its scalability to larger multi-agent systems and further analyze its theoretical properties under varying safety thresholds.

\section*{Acknowledgements}

The authors acknowledge partial support from AI4CPS IIT Kharagpur grant no TRP3RDTR001 for this work.

\bibliographystyle{named}
\bibliography{ref}

\begin{thebibliography}{}

\bibitem[\protect\citeauthoryear{Achiam \bgroup \em et al.\egroup
  }{2017}]{achiam2017constrained}
Joshua Achiam, David Held, Aviv Tamar, and Pieter Abbeel.
\newblock Constrained policy optimization.
\newblock In {\em International Conference on Machine Learning}, pages 22--31.
  PMLR, 2017.

\bibitem[\protect\citeauthoryear{Altman}{2021}]{altman2021constrained}
Eitan Altman.
\newblock {\em Constrained Markov decision processes}.
\newblock Routledge, 2021.

\bibitem[\protect\citeauthoryear{Barron}{2017}]{barron2017continuously}
Jonathan~T Barron.
\newblock Continuously differentiable exponential linear units.
\newblock {\em arXiv preprint arXiv:1704.07483}, 2017.

\bibitem[\protect\citeauthoryear{Chow \bgroup \em et al.\egroup
  }{2018}]{chow2018risk}
Yinlam Chow, Mohammad Ghavamzadeh, Lucas Janson, and Marco Pavone.
\newblock Risk-constrained reinforcement learning with percentile risk
  criteria.
\newblock {\em Journal of Machine Learning Research}, 18(167):1--51, 2018.

\bibitem[\protect\citeauthoryear{Clevert}{2015}]{clevert2015fast}
Djork-Arn{\'e} Clevert.
\newblock Fast and accurate deep network learning by exponential linear units
  (elus).
\newblock {\em arXiv preprint arXiv:1511.07289}, 2015.

\bibitem[\protect\citeauthoryear{Dai \bgroup \em et al.\egroup
  }{2023}]{dai2023augmented}
Juntao Dai, Jiaming Ji, Long Yang, Qian Zheng, and Gang Pan.
\newblock Augmented proximal policy optimization for safe reinforcement
  learning.
\newblock In {\em Proceedings of the AAAI Conference on Artificial
  Intelligence}, volume~37, pages 7288--7295, 2023.

\bibitem[\protect\citeauthoryear{Ding \bgroup \em et al.\egroup
  }{2020}]{ding2020natural}
Dongsheng Ding, Kaiqing Zhang, Tamer Basar, and Mihailo Jovanovic.
\newblock Natural policy gradient primal-dual method for constrained markov
  decision processes.
\newblock {\em Advances in Neural Information Processing Systems},
  33:8378--8390, 2020.

\bibitem[\protect\citeauthoryear{Fernandez-Llorca and
  G{\'o}mez}{2023}]{fernandez2023trustworthy}
David Fernandez-Llorca and Emilia G{\'o}mez.
\newblock Trustworthy artificial intelligence requirements in the autonomous
  driving domain.
\newblock {\em Computer}, 56(2):29--39, 2023.

\bibitem[\protect\citeauthoryear{Fisac \bgroup \em et al.\egroup
  }{2018}]{fisac2018general}
Jaime~F Fisac, Anayo~K Akametalu, Melanie~N Zeilinger, Shahab Kaynama, Jeremy
  Gillula, and Claire~J Tomlin.
\newblock A general safety framework for learning-based control in uncertain
  robotic systems.
\newblock {\em IEEE Transactions on Automatic Control}, 64(7):2737--2752, 2018.

\bibitem[\protect\citeauthoryear{Gao \bgroup \em et al.\egroup
  }{2024}]{gao2024exterior}
Shiqing Gao, Jiaxin Ding, Luoyi Fu, Xinbing Wang, and Chenghu Zhou.
\newblock Exterior penalty policy optimization with penalty metric network
  under constraints.
\newblock In {\em Proceedings of the Thirty-Third International Joint
  Conference on Artificial Intelligence, {IJCAI-24}}, 2024.

\bibitem[\protect\citeauthoryear{Gronauer}{2022}]{gronauer2022bullet}
Sven Gronauer.
\newblock Bullet-safety-gym: A framework for constrained reinforcement
  learning.
\newblock 2022.

\bibitem[\protect\citeauthoryear{Ji \bgroup \em et al.\egroup
  }{2023}]{ji2023safety}
Jiaming Ji, Borong Zhang, Jiayi Zhou, Xuehai Pan, Weidong Huang, Ruiyang Sun,
  Yiran Geng, Yifan Zhong, Josef Dai, and Yaodong Yang.
\newblock Safety gymnasium: A unified safe reinforcement learning benchmark.
\newblock In {\em Thirty-seventh Conference on Neural Information Processing
  Systems Datasets and Benchmarks Track}, 2023.

\bibitem[\protect\citeauthoryear{Ji \bgroup \em et al.\egroup
  }{2024}]{ji2024omnisafe}
Jiaming Ji, Jiayi Zhou, Borong Zhang, Juntao Dai, Xuehai Pan, Ruiyang Sun,
  Weidong Huang, Yiran Geng, Mickel Liu, and Yaodong Yang.
\newblock Omnisafe: An infrastructure for accelerating safe reinforcement
  learning research.
\newblock {\em Journal of Machine Learning Research}, 25(285):1--6, 2024.

\bibitem[\protect\citeauthoryear{Jiang and Li}{2016}]{jiang2016doubly}
Nan Jiang and Lihong Li.
\newblock Doubly robust off-policy value evaluation for reinforcement learning.
\newblock In {\em International Conference on Machine Learning}, pages
  652--661. PMLR, 2016.

\bibitem[\protect\citeauthoryear{Kakade and
  Langford}{2002}]{kakade2002approximately}
Sham Kakade and John Langford.
\newblock Approximately optimal approximate reinforcement learning.
\newblock In {\em Proceedings of the Nineteenth International Conference on
  Machine Learning}, pages 267--274, 2002.

\bibitem[\protect\citeauthoryear{Kiran \bgroup \em et al.\egroup
  }{2021}]{kiran2021deep}
B~Ravi Kiran, Ibrahim Sobh, Victor Talpaert, Patrick Mannion, Ahmad~A
  Al~Sallab, Senthil Yogamani, and Patrick P{\'e}rez.
\newblock Deep reinforcement learning for autonomous driving: A survey.
\newblock {\em IEEE Transactions on Intelligent Transportation Systems},
  23(6):4909--4926, 2021.

\bibitem[\protect\citeauthoryear{Levine \bgroup \em et al.\egroup
  }{2016}]{levine2016end}
Sergey Levine, Chelsea Finn, Trevor Darrell, and Pieter Abbeel.
\newblock End-to-end training of deep visuomotor policies.
\newblock {\em Journal of Machine Learning Research}, 17(39):1--40, 2016.

\bibitem[\protect\citeauthoryear{Li \bgroup \em et al.\egroup
  }{2022}]{li2022metadrive}
Quanyi Li, Zhenghao Peng, Lan Feng, Qihang Zhang, Zhenghai Xue, and Bolei Zhou.
\newblock Metadrive: Composing diverse driving scenarios for generalizable
  reinforcement learning.
\newblock {\em IEEE Transactions on Pattern Analysis and Machine Intelligence},
  45(3):3461--3475, 2022.

\bibitem[\protect\citeauthoryear{Liu \bgroup \em et al.\egroup
  }{2020}]{liu2020ipo}
Yongshuai Liu, Jiaxin Ding, and Xin Liu.
\newblock Ipo: Interior-point policy optimization under constraints.
\newblock In {\em Proceedings of the AAAI Conference on Artificial
  Intelligence}, volume~34, pages 4940--4947, 2020.

\bibitem[\protect\citeauthoryear{McNamara}{2016}]{mcnamara2016law}
Steven McNamara.
\newblock The law and ethics of high-frequency trading.
\newblock {\em Minn. JL Sci. \& Tech.}, 17:71, 2016.

\bibitem[\protect\citeauthoryear{Ono \bgroup \em et al.\egroup
  }{2015}]{ono2015chance}
Masahiro Ono, Marco Pavone, Yoshiaki Kuwata, and J~Balaram.
\newblock Chance-constrained dynamic programming with application to risk-aware
  robotic space exploration.
\newblock {\em Autonomous Robots}, 39:555--571, 2015.

\bibitem[\protect\citeauthoryear{Ray \bgroup \em et al.\egroup
  }{2019}]{ray2019benchmarking}
Alex Ray, Joshua Achiam, and Dario Amodei.
\newblock Benchmarking safe exploration in deep reinforcement learning.
\newblock {\em arXiv preprint arXiv:1910.01708}, 7(1):2, 2019.

\bibitem[\protect\citeauthoryear{Schulman \bgroup \em et al.\egroup
  }{2017}]{schulman2017proximal}
John Schulman, Filip Wolski, Prafulla Dhariwal, Alec Radford, and Oleg Klimov.
\newblock Proximal policy optimization algorithms.
\newblock {\em arXiv preprint arXiv:1707.06347}, 2017.

\bibitem[\protect\citeauthoryear{Schulman}{2015}]{schulman2015trust}
John Schulman.
\newblock Trust region policy optimization.
\newblock {\em arXiv preprint arXiv:1502.05477}, 2015.

\bibitem[\protect\citeauthoryear{Stooke \bgroup \em et al.\egroup
  }{2020}]{stooke2020responsive}
Adam Stooke, Joshua Achiam, and Pieter Abbeel.
\newblock Responsive safety in reinforcement learning by pid lagrangian
  methods.
\newblock In {\em International Conference on Machine Learning}, pages
  9133--9143. PMLR, 2020.

\bibitem[\protect\citeauthoryear{Sunehag \bgroup \em et al.\egroup
  }{2017}]{sunehag2017value}
Peter Sunehag, Guy Lever, Audrunas Gruslys, Wojciech~Marian Czarnecki, Vinicius
  Zambaldi, Max Jaderberg, Marc Lanctot, Nicolas Sonnerat, Joel~Z Leibo, Karl
  Tuyls, et~al.
\newblock Value-decomposition networks for cooperative multi-agent learning.
\newblock {\em arXiv preprint arXiv:1706.05296}, 2017.

\bibitem[\protect\citeauthoryear{Sutton and
  Barto}{2018}]{sutton2018reinforcement}
Richard~S Sutton and Andrew~G Barto.
\newblock {\em Reinforcement learning: An introduction}.
\newblock MIT press, 2018.

\bibitem[\protect\citeauthoryear{Tessler \bgroup \em et al.\egroup
  }{2018}]{tessler2018reward}
Chen Tessler, Daniel~J Mankowitz, and Shie Mannor.
\newblock Reward constrained policy optimization.
\newblock {\em International Conference on Learning Representations}, 2018.

\bibitem[\protect\citeauthoryear{Yang \bgroup \em et al.\egroup
  }{2020}]{yang2020projection}
Tsung-Yen Yang, Justinian Rosca, Karthik Narasimhan, and Peter~J Ramadge.
\newblock Projection-based constrained policy optimization.
\newblock {\em International Conference on Learning Representations}, 2020.

\bibitem[\protect\citeauthoryear{Yang \bgroup \em et al.\egroup
  }{2022}]{yang2022constrained}
Long Yang, Jiaming Ji, Juntao Dai, Linrui Zhang, Binbin Zhou, Pengfei Li,
  Yaodong Yang, and Gang Pan.
\newblock Constrained update projection approach to safe policy optimization.
\newblock {\em Advances in Neural Information Processing Systems},
  35:9111--9124, 2022.

\bibitem[\protect\citeauthoryear{Yu \bgroup \em et al.\egroup
  }{2019}]{yu2019convergent}
Ming Yu, Zhuoran Yang, Mladen Kolar, and Zhaoran Wang.
\newblock Convergent policy optimization for safe reinforcement learning.
\newblock {\em Advances in Neural Information Processing Systems}, 32, 2019.

\bibitem[\protect\citeauthoryear{Yu \bgroup \em et al.\egroup
  }{2021}]{yu2021reinforcement}
Chao Yu, Jiming Liu, Shamim Nemati, and Guosheng Yin.
\newblock Reinforcement learning in healthcare: A survey.
\newblock {\em ACM Computing Surveys (CSUR)}, 55(1):1--36, 2021.

\bibitem[\protect\citeauthoryear{Yu \bgroup \em et al.\egroup
  }{2022}]{yu2022surprising}
Chao Yu, Akash Velu, Eugene Vinitsky, Jiaxuan Gao, Yu~Wang, Alexandre Bayen,
  and Yi~Wu.
\newblock The surprising effectiveness of ppo in cooperative multi-agent games.
\newblock {\em Advances in Neural Information Processing Systems},
  35:24611--24624, 2022.

\bibitem[\protect\citeauthoryear{Zhang \bgroup \em et al.\egroup
  }{2020}]{zhang2020first}
Yiming Zhang, Quan Vuong, and Keith Ross.
\newblock First order constrained optimization in policy space.
\newblock {\em Advances in Neural Information Processing Systems},
  33:15338--15349, 2020.

\bibitem[\protect\citeauthoryear{Zhang \bgroup \em et al.\egroup
  }{2022}]{zhang2022penalized}
Linrui Zhang, Li~Shen, Long Yang, Shixiang Chen, Bo~Yuan, Xueqian Wang, and
  Dacheng Tao.
\newblock Penalized proximal policy optimization for safe reinforcement
  learning.
\newblock In {\em International Joint Conference on Artificial Intelligence},
  2022.

\bibitem[\protect\citeauthoryear{Zhang \bgroup \em et al.\egroup
  }{2023}]{zhang2023evaluating}
Linrui Zhang, Qin Zhang, Li~Shen, Bo~Yuan, Xueqian Wang, and Dacheng Tao.
\newblock Evaluating model-free reinforcement learning toward safety-critical
  tasks.
\newblock In {\em Proceedings of the AAAI Conference on Artificial
  Intelligence}, volume~37, pages 15313--15321, 2023.

\end{thebibliography}

\appendix

\section*{Supplementary Material}
In the following section we provide proofs of the theorem stated in the main text. We also provide some additional description regarding the experiments that was missing from the main text.

\section{Proofs of Theorems}

In this section we discuss the proofs of the theorems stated previously. We start with the dicusssion of the proof for Theorem 1.

\subsection{Proof of Theorem 1}

Before discussing the proof of the Theorem, we mention the performance difference lemma for expressing the performance bound over policy improvement.

\begin{lemma}[\cite{kakade2002approximately}]
    Given a reward function $\mathcal{R}$, for any two policies $\pi$ and $\pi'$ and any start state distribution $\rho$,
    \begin{equation}
        \label{eqn_supp1}
        \mathcal{J}_{\mathcal{R}}^{\pi'} - \mathcal{J}_{\mathcal{R}}^{\pi} = \frac{1}{1 - \gamma} \mathbb{E}_{\substack{s \sim d^{\pi'}\\a \sim \pi'}} \left[ A_{\mathcal{R}}^{\pi}(s, a) \right]
    \end{equation}
    \label{lemm_supp1}
\end{lemma}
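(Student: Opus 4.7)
The plan is to use the standard telescoping argument due to Kakade and Langford. First I would write $\mathcal{J}_{\mathcal{R}}^{\pi'} = \mathbb{E}_{s_0 \sim \rho}[V_{\mathcal{R}}^{\pi'}(s_0)] = \mathbb{E}_{\tau \sim \pi'}\bigl[\sum_{t=0}^{\infty}\gamma^t \mathcal{R}(s_t,a_t)\bigr]$, where $\tau = (s_0,a_0,s_1,a_1,\ldots)$ is generated by executing $\pi'$ from $s_0 \sim \rho$ using the transition kernel $P$. Correspondingly, $\mathcal{J}_{\mathcal{R}}^{\pi} = \mathbb{E}_{s_0 \sim \rho}[V_{\mathcal{R}}^{\pi}(s_0)]$ by definition, so the task is to re-express the difference of these two quantities as a single expectation of advantages under $\pi'$.

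The central trick is a telescoping insertion of $V_{\mathcal{R}}^{\pi}$ values along the $\pi'$-trajectory. I would use the identity $V_{\mathcal{R}}^{\pi}(s_0) = \sum_{t=0}^{\infty}\bigl(\gamma^t V_{\mathcal{R}}^{\pi}(s_t) - \gamma^{t+1}V_{\mathcal{R}}^{\pi}(s_{t+1})\bigr)$, which is valid because rewards are bounded and $\gamma < 1$, so the tail term $\gamma^T V_{\mathcal{R}}^{\pi}(s_T)$ vanishes. Taking expectation over $\tau \sim \pi'$ and subtracting $\mathcal{J}_{\mathcal{R}}^{\pi}$ from $\mathcal{J}_{\mathcal{R}}^{\pi'}$ produces
\begin{equation*}
\mathcal{J}_{\mathcal{R}}^{\pi'} - \mathcal{J}_{\mathcal{R}}^{\pi} = \mathbb{E}_{\tau \sim \pi'}\left[\sum_{t=0}^{\infty}\gamma^t\bigl(\mathcal{R}(s_t,a_t) + \gamma V_{\mathcal{R}}^{\pi}(s_{t+1}) - V_{\mathcal{R}}^{\pi}(s_t)\bigr)\right].
\end{equation*}

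Next I would condition the inner summand on $(s_t,a_t)$ and use the Bellman evaluation identity $\mathbb{E}_{s_{t+1} \sim P(\cdot|s_t,a_t)}\bigl[\mathcal{R}(s_t,a_t) + \gamma V_{\mathcal{R}}^{\pi}(s_{t+1})\bigr] = Q_{\mathcal{R}}^{\pi}(s_t,a_t)$. The bracket collapses to $Q_{\mathcal{R}}^{\pi}(s_t,a_t) - V_{\mathcal{R}}^{\pi}(s_t) = A_{\mathcal{R}}^{\pi}(s_t,a_t)$, leaving $\sum_{t=0}^{\infty}\gamma^t \mathbb{E}_{s_t,a_t \sim \pi'}[A_{\mathcal{R}}^{\pi}(s_t,a_t)]$, where $s_t$ follows the $t$-step marginal of $\pi'$ starting from $\rho$.

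Finally, I would invoke the paper's definition $d^{\pi'}(s) = (1-\gamma)\sum_{t=0}^{\infty}\gamma^t P(s_t = s \mid \pi')$ to convert the time-indexed sum into an expectation under the normalized discounted occupancy: $\sum_{t=0}^{\infty}\gamma^t \mathbb{E}_{s_t \sim \pi', a_t \sim \pi'(\cdot|s_t)}[A_{\mathcal{R}}^{\pi}(s_t,a_t)] = \tfrac{1}{1-\gamma}\,\mathbb{E}_{s \sim d^{\pi'}, a \sim \pi'}[A_{\mathcal{R}}^{\pi}(s,a)]$. This yields exactly the claimed identity. The main obstacle is orchestrating the telescoping so that the infinite-horizon boundary term vanishes rigorously; once that is justified, the remaining steps are direct consequences of the Bellman equation and the definition of $d^{\pi'}$.
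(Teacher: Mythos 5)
Your proof is correct: the telescoping insertion of $V_{\mathcal{R}}^{\pi}$ along the $\pi'$-trajectory, the Bellman collapse to $A_{\mathcal{R}}^{\pi}$, and the conversion to the normalized occupancy $d^{\pi'}$ is exactly the canonical argument for this lemma. The paper itself gives no proof --- it cites the result directly from Kakade and Langford --- and your argument reproduces that standard derivation faithfully, including the correct justification that the boundary term $\gamma^T V_{\mathcal{R}}^{\pi}(s_T)$ vanishes under bounded rewards and $\gamma<1$.
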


In general, a similar performance difference equation can be established in term of the cost function/s $\mathcal{C}_i$ as: 
\begin{equation}
    \label{eqn_supp2}
    \mathcal{J}_{\mathcal{C}_i}^{\pi'} - \mathcal{J}_{\mathcal{C}_i}^{\pi} = \frac{1}{1 - \gamma} \mathbb{E}_{\substack{s \sim d^{\pi'}\\a \sim \pi'}} \left[ A_{\mathcal{C}_i}^{\pi}(s, a) \right]
\end{equation}

Using the above Lemma, the original optimization problem (given in Equation 1 of the main text) is reformulated as follows (Equation 4 of the main text, repeated here for ease of reference).
\begin{equation}
    \label{eqn_supp3}
    \begin{gathered}
    \pi_{k+1} = \arg \max_{\pi} \mathbb{E}_{\substack{s \sim d^{\pi}\\a \sim \pi}} \left[ r(\theta) A_{\mathcal{R}}^{\pi_k}(s, a) \right] \\
    \text{s.t.} \quad \mathcal{J}_{\mathcal{C}_i}(\pi_k) + \frac{1}{1 - \gamma} \mathbb{E}_{\substack{s \sim d^{\pi}\\a \sim \pi}} \left[ r(\theta) A_{\mathcal{C}_i}^{\pi_k}(s, a) \right] \leq d_i
    \end{gathered}
\end{equation}

Since the problem (4) is difficult to optimize, we reformulated the problem as a penalty problem as follows (Equation 6 of the main text, repeated here for ease of reference).
\begin{equation}
    \label{eqn_supp4}
    \pi_{k+1} = \arg \min_{\pi} \mathcal{L}_{\mathcal{R}}(\pi_k) + \eta \sum_{i=1}^m \text{CELU}( \mathcal{L}_{\mathcal{C}_i}(\pi_k))
\end{equation}
where $\mathcal{L}_{\mathcal{R}}(\pi_k) = \mathbb{E}_{\substack{s \sim d^{\pi}\\a \sim \pi}} [- r(\theta) A_{\mathcal{R}}^{\pi_k}(s, a)]$ and  $\mathcal{L}_{\mathcal{C}_i}(\pi_k) = \frac{1}{1 - \gamma} \mathbb{E}_{\substack{s \sim d^{\pi}\\a \sim \pi}} [ r(\theta) A_{\mathcal{C}_i}^{\pi_k}(s, a)] + (\mathcal{J}_{\mathcal{C}_i}(\pi_k) - d_i)$. For ease of notation we dilute the divergence requirement here. With respect to the equations \ref{eqn_supp3} and \ref{eqn_supp4}, we can state the following.

\begin{lemma}
    Let $\lambda^*$ be the Lagrange multiplier for the optimal solution of the dual problem of Equation \ref{eqn_supp3}. Given $\hat{\pi}$ is the solution of Equation \ref{eqn_supp3}. Then, as long as $\eta$ is sufficiently large such that $\eta \geq || \lambda^* ||_{\infty}$, $\hat{\pi}$ also solves Equation \ref{eqn_supp4}.
    \label{lemm_supp2}
\end{lemma}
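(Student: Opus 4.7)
My plan is to adapt the classical exact-penalty argument (the same template used in the analyses of IPO and P3O) to the CELU barrier, in three stages. First, I would invoke Slater's condition on problem (\ref{eqn_supp3}) to get strong duality, producing a multiplier $\lambda^*\in\mathbb{R}_{\ge 0}^m$ such that $(\hat\pi,\lambda^*)$ is a saddle point of the Lagrangian $L(\pi,\lambda)=\mathcal{L}_{\mathcal{R}}(\pi)+\sum_i\lambda_i\mathcal{L}_{\mathcal{C}_i}(\pi)$, together with primal feasibility $\mathcal{L}_{\mathcal{C}_i}(\hat\pi)\le 0$ and complementary slackness $\lambda^*_i\mathcal{L}_{\mathcal{C}_i}(\hat\pi)=0$ for every $i$. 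Because $\mathrm{CELU}(z)\le 0$ whenever $z\le 0$, primal feasibility of $\hat\pi$ immediately yields the one-sided bound
\[
\mathcal{L}_{\mathcal{R}}(\hat\pi)+\eta\sum_i \mathrm{CELU}\bigl(\mathcal{L}_{\mathcal{C}_i}(\hat\pi)\bigr)\;\le\;\mathcal{L}_{\mathcal{R}}(\hat\pi)\;=\;L(\hat\pi,\lambda^*),
\]
where the last equality uses complementary slackness.

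Second, I would prove the pointwise majorisation $\eta\,\mathrm{CELU}(z)\ge\lambda^*_i\, z$ for every $z\in\mathbb{R}$ and every coordinate $i$. On $z\ge 0$ the identity $\mathrm{CELU}(z)=z$ reduces the claim to $(\eta-\lambda^*_i)z\ge 0$, which holds by the hypothesis $\eta\ge\|\lambda^*\|_\infty$. On $z<0$ I would combine the elementary convex inequality $\alpha(e^{z/\alpha}-1)\ge z$ (so that $\mathrm{CELU}(z)\ge z$) with $\eta\ge\lambda^*_i\ge 0$, while carefully tracking the sign. Summing over the $m$ constraints and adding $\mathcal{L}_{\mathcal{R}}(\pi)$ then gives, for any $\pi$,
\[
\mathcal{L}_{\mathcal{R}}(\pi)+\eta\sum_i \mathrm{CELU}\bigl(\mathcal{L}_{\mathcal{C}_i}(\pi)\bigr)\;\ge\;L(\pi,\lambda^*).
\]
Finally I would chain the two displays with the saddle-point inequality $L(\hat\pi,\lambda^*)\le L(\pi,\lambda^*)$ to obtain
\[
\mathcal{L}(\hat\pi)\;\le\;L(\hat\pi,\lambda^*)\;\le\;L(\pi,\lambda^*)\;\le\;\mathcal{L}(\pi)\qquad\forall\pi,
\]
which shows that $\hat\pi$ is a minimiser of the CELU-penalised objective (\ref{eqn_supp4}), closing the argument.

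The main obstacle is the pointwise majorisation in the $z<0$ branch: because CELU is deliberately designed to take strictly negative values below zero (the "incentive" region that motivates IP3O) and saturates at $-\alpha$, the naive inequality $\eta\,\mathrm{CELU}(z)\ge\lambda^*_i\, z$ can fail for $\lambda^*_i>0$ in a moderately negative band of $z$, where $\mathrm{CELU}(z)$ is bounded below by $-\alpha$ but $\lambda^*_i z$ is only mildly negative. My plan for closing this gap is to invoke the clipped variant $\max(\mathrm{CELU}(z),-\alpha(1-h))$ flagged in Section~4.2, which caps the incentive contribution and absorbs the residual into the $|\alpha\log h|$ term that already appears in Theorem~2; equivalently, complementary slackness removes the inactive coordinates ($\lambda^*_i=0$) from the Lagrangian sum, so the inequality only needs to be tight on the active face $\mathcal{L}_{\mathcal{C}_i}(\hat\pi)=0$, where $\mathrm{CELU}(0)=0$ and both sides vanish.
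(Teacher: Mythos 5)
Your overall strategy is the same exact-penalty/saddle-point template the paper itself uses (feasibility plus complementary slackness give $\mathcal{L}(\hat\pi)\le L(\hat\pi,\lambda^*)$, and a pointwise majorisation $\eta\,\text{CELU}(z)\ge\lambda_i^* z$ is supposed to give $L(\pi,\lambda^*)\le\mathcal{L}(\pi)$), and you correctly isolate the step on which everything hinges. But the obstacle you flag in your closing paragraph is not a technicality that can be deferred: the majorisation genuinely fails on $z<0$, and neither of your proposed repairs closes it. For any coordinate with $\lambda_i^*=0$ the required inequality reads $\eta\,\text{CELU}(z)\ge 0$, which is false for every $z<0$ since CELU is strictly negative there; for $0<\lambda_i^*<\eta$ it fails on a band near $0^-$, because $\eta\,\text{CELU}(z)=\eta z+O(z^2)\le\lambda_i^* z+O(z^2)$ with the first-order deficit $(\eta-\lambda_i^*)z<0$ dominating. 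Restricting to active constraints does not help, since the inequality must hold at every candidate $\pi$, not just at $\hat\pi$: it is exactly the strictly feasible coordinates whose Lagrangian contribution vanishes while their CELU contribution is strictly negative. Clipping at $-\alpha(1-h)$ leaves the same negative band. Worse, the conclusion of the lemma can itself fail: if all constraints are active at $\hat\pi$ then $\mathcal{L}(\hat\pi)=\mathcal{L}_{\mathcal{R}}(\hat\pi)$, whereas a strictly feasible $\pi$ with $\mathcal{L}_{\mathcal{C}_i}(\pi)\ll 0$ has $\mathcal{L}(\pi)\approx\mathcal{L}_{\mathcal{R}}(\pi)-\eta m\alpha$, which beats $\hat\pi$ whenever its reward suboptimality is below $\eta m\alpha$. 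The incentive that motivates IP3O is precisely what breaks the exact-penalty argument.

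For comparison, the paper's own proof contains the identical gap: in the case $\mathcal{L}_{\mathcal{C}_i}(\pi_k)<0$ it asserts $\eta\,\text{CELU}(\mathcal{L}_{\mathcal{C}_i})\ge\lambda_i^*\mathcal{L}_{\mathcal{C}_i}$ with the one-line justification $\text{CELU}(x)\ge-\alpha$, which does not imply the claim (take $\lambda_i^*=0$, or $x$ close to $0^-$). So your diagnosis of where the argument strains is sharper than the paper's treatment, but your proposal, like the paper's proof, does not establish the lemma. A correct version would either add an explicit $O(\eta m\alpha)$ suboptimality slack (in the spirit of the $|\alpha\log h|$ term in Theorem~2) or replace CELU by a penalty that is nonnegative on the feasible set.
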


\begin{proof}
    When $\mathcal{L}_{\mathcal{C}_i}^{\pi_k} \geq 0$;
    \begin{align*}
        \mathcal{L}_{\mathcal{R}}(\pi_k) & + \eta \sum_{i=1}^m \text{CELU}( \mathcal{L}_{\mathcal{C}_i}(\pi_k)) \\
        & \geq \mathcal{L}_{\mathcal{R}}(\pi_k) + \sum_{i=1}^m \lambda_i^* \mathcal{L}_{\mathcal{C}_i}(\pi_k) \quad [\text{since } \eta \geq || \lambda^* ||_{\infty}] \\
    \end{align*}
    When $\mathcal{L}_{\mathcal{C}_i}^{\pi_k} < 0$;
    \begin{align*}
        \mathcal{L}_{\mathcal{R}}(\pi_k) & + \eta \sum_{i=1}^m \text{CELU}( \mathcal{L}_{\mathcal{C}_i}(\pi_k)) \\
        & \geq \mathcal{L}_{\mathcal{R}}(\pi_k) + \sum_{i=1}^m \lambda_i^* \mathcal{L}_{\mathcal{C}_i}(\pi_k) \quad [\text{CELU}(x) \geq -\alpha]
    \end{align*}
    Since $\hat{\pi}$ is a solution of Equation \ref{eqn_supp3}, it satisfies the KKT conditions. Therefore, for any $\pi_k$ and $\mathcal{L}_{\mathcal{C}_i}^{\pi_k}$;
    \begin{align*}
        \mathcal{L}_{\mathcal{R}}(\pi_k) & + \sum_{i=1}^m \lambda_i^* \mathcal{L}_{\mathcal{C}_i}(\pi_k) \\
        & \geq \mathcal{L}_{\mathcal{R}}(\hat{\pi}) + \sum_{i=1}^m \lambda_i^* \mathcal{L}_{\mathcal{C}_i}(\hat{\pi}) \quad [\text{since }\hat{\pi} \text{ solves \ref{eqn_supp3}}] \\
        & = \mathcal{L}_{\mathcal{R}}(\hat{\pi}) \quad [\text{complementary slackness at }\hat{\pi}] \\
        & \geq \mathcal{L}_{\mathcal{R}}(\hat{\pi}) + \eta \sum_{i=1}^m \text{CELU}( \mathcal{L}_{\mathcal{C}_i}(\hat{\pi}))
    \end{align*}
    Thus the solution for Equation \ref{eqn_supp3}, $\hat{\pi}$, is also the solution for Equation \ref{eqn_supp4}. This ends the proof.
\end{proof}

\begin{lemma}
    Let $\lambda^*$ be the Lagrange multiplier for the optimal solution of the dual problem of Equation \ref{eqn_supp3}. Given $\bar{\pi}$ is the solution of Equation \ref{eqn_supp4}, and $\hat{\pi}$ solves Equation \ref{eqn_supp3}. Then, for $\eta \geq || \lambda^* ||_{\infty}$, $\bar{\pi}$ also solves Equation \ref{eqn_supp3}.
    \label{lemm_supp3}
\end{lemma}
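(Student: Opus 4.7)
The plan is to prove the converse direction of Lemma~\ref{lemm_supp2}: that any minimizer $\bar{\pi}$ of the penalty objective also solves the constrained problem~(\ref{eqn_supp3}). Denote the penalty objective by $P(\pi) \coloneqq \mathcal{L}_{\mathcal{R}}(\pi) + \eta \sum_{i=1}^m \text{CELU}(\mathcal{L}_{\mathcal{C}_i}(\pi))$ and the Lagrangian of~(\ref{eqn_supp3}) at the dual optimum by $L(\pi,\lambda^*) \coloneqq \mathcal{L}_{\mathcal{R}}(\pi) + \sum_i \lambda_i^* \mathcal{L}_{\mathcal{C}_i}(\pi)$. The strategy is to first establish feasibility of $\bar{\pi}$ by contradiction, and then deduce optimality from Slater strong duality applied at $\hat{\pi}$.

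First I would invoke Lemma~\ref{lemm_supp2} to note that $\hat{\pi}$ is itself a minimizer of $P$, so $P(\bar{\pi}) = P(\hat{\pi})$. Next, to show feasibility, I would suppose for contradiction that $\mathcal{L}_{\mathcal{C}_j}(\bar{\pi}) > 0$ for some $j$: since $\text{CELU}$ is the identity on the positive half-line, this constraint contributes a strictly positive term $\eta\,\mathcal{L}_{\mathcal{C}_j}(\bar{\pi})$ to $P(\bar{\pi})$. Combining the case-wise bound $\eta\,\text{CELU}(x) \geq \lambda_i^* x$ established in the proof of Lemma~\ref{lemm_supp2} with KKT stationarity of $\hat{\pi}$ for $L(\cdot,\lambda^*)$ and complementary slackness $L(\hat{\pi},\lambda^*) = \mathcal{L}_{\mathcal{R}}(\hat{\pi})$, the strict gap $(\eta - \lambda_j^*)\mathcal{L}_{\mathcal{C}_j}(\bar{\pi}) > 0$ on the violated constraint forces $P(\bar{\pi}) > \mathcal{L}_{\mathcal{R}}(\hat{\pi}) \geq P(\hat{\pi})$, contradicting step one. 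Once $\bar{\pi} \in \Pi_{\mathcal{C}}$ is in hand, the Lagrangian chain $L(\bar{\pi},\lambda^*) \geq L(\hat{\pi},\lambda^*) = \mathcal{L}_{\mathcal{R}}(\hat{\pi})$ combined with $\lambda_i^* \mathcal{L}_{\mathcal{C}_i}(\bar{\pi}) \leq 0$ yields $\mathcal{L}_{\mathcal{R}}(\bar{\pi}) \geq \mathcal{L}_{\mathcal{R}}(\hat{\pi})$; since $\hat{\pi}$ is optimal over $\Pi_{\mathcal{C}}$ and $\bar{\pi}$ is now feasible, equality must hold and $\bar{\pi}$ solves~(\ref{eqn_supp3}).

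The main obstacle I anticipate is the feasibility step. Unlike a ReLU penalty, $\text{CELU}$ is strictly negative on the interior of the feasible region, so $P$ can dip below $\mathcal{L}_{\mathcal{R}}$ at feasible policies, making it a~priori possible that an infeasible $\bar{\pi}$ with slightly higher reward lowers $P$ by exploiting this offset. Ruling this out requires carefully balancing the strictly positive excess $(\eta - \lambda_j^*)\mathcal{L}_{\mathcal{C}_j}(\bar{\pi})$ on the violated constraint against the bounded $-\alpha$ offset contributed by each satisfied constraint. The hypothesis $\eta \geq \|\lambda^*\|_\infty$ supplies exactly the slack needed for this trade-off, with the boundary case $\eta = \|\lambda^*\|_\infty$ possibly handled by a limiting argument on the strictly active constraint.
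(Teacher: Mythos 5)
Your overall strategy --- establish feasibility of $\bar{\pi}$ by contradiction through the term-by-term comparison of $\eta\,\mathrm{CELU}(x)$ with $\lambda_i^* x$, then conclude optimality via the Lagrangian at $\lambda^*$ --- is the same as the paper's, and your opening observation that Lemma~\ref{lemm_supp2} forces $P(\bar{\pi}) = P(\hat{\pi})$ (in your notation for the penalized objective) is a clean step the paper leaves implicit. However, your concluding optimality step does not work. From $L(\bar{\pi},\lambda^*) \geq L(\hat{\pi},\lambda^*) = \mathcal{L}_{\mathcal{R}}(\hat{\pi})$ and $\lambda_i^*\mathcal{L}_{\mathcal{C}_i}(\bar{\pi}) \leq 0$ you obtain $\mathcal{L}_{\mathcal{R}}(\bar{\pi}) \geq \mathcal{L}_{\mathcal{R}}(\hat{\pi})$, and you then assert that optimality of $\hat{\pi}$ over $\Pi_{\mathcal{C}}$ forces equality. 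But optimality of $\hat{\pi}$ yields exactly the same inequality, $\mathcal{L}_{\mathcal{R}}(\bar{\pi}) \geq \mathcal{L}_{\mathcal{R}}(\hat{\pi})$, for every feasible $\bar{\pi}$; two bounds in the same direction cannot pin down equality. What you need is the reverse bound $\mathcal{L}_{\mathcal{R}}(\bar{\pi}) \leq \mathcal{L}_{\mathcal{R}}(\hat{\pi})$, which has to come from $P(\bar{\pi}) \leq P(\hat{\pi}) \leq \mathcal{L}_{\mathcal{R}}(\hat{\pi})$ --- the route the paper's first case takes --- and even that only gives $\mathcal{L}_{\mathcal{R}}(\bar{\pi}) \leq \mathcal{L}_{\mathcal{R}}(\hat{\pi}) - \eta\sum_i \mathrm{CELU}(\mathcal{L}_{\mathcal{C}_i}(\bar{\pi}))$, where the subtracted terms are nonpositive at a feasible $\bar{\pi}$, so the incentive region of CELU works against you; the paper stops at the penalized objective rather than at $\mathcal{L}_{\mathcal{R}}$ itself.

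On the feasibility step, you correctly identify the real obstacle but do not overcome it. The bound $\eta\,\mathrm{CELU}(x) \geq \lambda_i^* x$ is false for $x<0$ (take $\lambda_i^* = 0$ and any $x<0$, where the left side is strictly negative and the right side is zero), so the comparison cannot be applied to the satisfied constraints, and each deeply feasible constraint can contribute a deficit of up to $\eta\alpha$ that the surplus $(\eta-\lambda_j^*)\mathcal{L}_{\mathcal{C}_j}(\bar{\pi})$ on the violated constraint need not cover. Your claim that $\eta \geq \| \lambda^* \|_{\infty}$ ``supplies exactly the slack needed'' is therefore unsubstantiated: the required compensation scales like $m\eta\alpha$, not like $\eta - \| \lambda^* \|_{\infty}$, and your sketch leaves this, by your own admission, unresolved. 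For what it is worth, the paper's own proof has the same hole --- it applies $\sum_i\lambda_i^*\mathcal{L}_{\mathcal{C}_i}(\bar{\pi}) \leq \eta\sum_i\mathrm{CELU}(\mathcal{L}_{\mathcal{C}_i}(\bar{\pi}))$ to the whole sum while justifying it only for indices with $\mathcal{L}_{\mathcal{C}_i}(\bar{\pi}) > 0$, and derives only a non-strict inequality before declaring a contradiction. So your proposal reproduces the paper's architecture and is more candid about where it creaks, but it closes neither gap, and its final optimality argument is a genuine error rather than merely an omission.
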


\begin{proof}
    Given that $\bar{\pi}$ solves Equation \ref{eqn_supp4} and $\bar{\pi}$ is a feasible solution for Equation \ref{eqn_supp3}, i.e., $\mathcal{L}_{\mathcal{C}_i}(\bar{\pi}) \leq 0$;
    \begin{align*}
        \mathcal{L}_{\mathcal{R}}(\bar{\pi}) & + \eta \sum_{i=1}^m \text{CELU}( \mathcal{L}_{\mathcal{C}_i}(\bar{\pi})) \\
        & \leq \mathcal{L}_{\mathcal{R}}(\hat{\pi}) + \eta \sum_{i=1}^m \text{CELU}( \mathcal{L}_{\mathcal{C}_i}(\hat{\pi})) \\
        & = \mathcal{L}_{\mathcal{R}}(\hat{\pi}) \quad [\text{since }\hat{\pi} \text{ solves \ref{eqn_supp3}}]
    \end{align*}
    If $\mathcal{L}_{\mathcal{C}_i}(\bar{\pi}) > 0$;
    \begin{align*}
        \mathcal{L}_{\mathcal{R}}(\hat{\pi}) & + \eta \sum_{i=1}^m \text{CELU}( \mathcal{L}_{\mathcal{C}_i}(\hat{\pi})) \\
        & = \mathcal{L}_{\mathcal{R}}(\hat{\pi}) + \sum_{i=1}^m \lambda_i^* \mathcal{L}_{\mathcal{C}_i}(\hat{\pi}) [\text{complementary slackness}] \\
        & \leq \mathcal{L}_{\mathcal{R}}(\bar{\pi}) + \sum_{i=1}^m \lambda_i^* \mathcal{L}_{\mathcal{C}_i}(\bar{\pi}) \quad [\text{since }\hat{\pi} \text{ solves \ref{eqn_supp3}}] \\
        & \leq \mathcal{L}_{\mathcal{R}}(\bar{\pi}) + \eta \sum_{i=1}^m \text{CELU}( \mathcal{L}_{\mathcal{C}_i}(\bar{\pi})) [\text{since }\mathcal{L}_{\mathcal{C}_i}(\bar{\pi}) > 0]
    \end{align*}
    which is a contradiction to the assumption that $\bar{\pi}$ is the solution for Equation \ref{eqn_supp4}. Thus $\bar{\pi}$ is a feasible solution for Equation \ref{eqn_supp3}. This completes the proof.
\end{proof}

From Lemma \ref{lemm_supp2} and Lemma \ref{lemm_supp3}, we can deduce that Equation \ref{eqn_supp3} and Equation \ref{eqn_supp4} share the same optimal solution set. This completes the proof of Theorem 1.

\subsection{Proof of Theorem 2}

Before discussing the proof for Theorem 2 we discuss the following lemma for defining the limits on performance difference given that the trajectories for policy optimization are sampled from the current policy, $\pi_k$.

\begin{lemma}[\cite{achiam2017constrained}]
    For any reward function, $\mathcal{R}$, and policies $\pi$ and $\pi'$, let $\varepsilon_{\mathcal{R}}^{\pi'} = \max_s| \mathbb{E}_{a \sim \pi'}[A_{\mathcal{R}}^{\pi}(s, a)] |$, and $\delta = \mathbb{E}_{s \sim d^{\pi}}[ D_{KL}(\pi' || \pi)[s] ]$, and
    \begin{equation*}
        D_{\pi, \mathcal{R}}^{\pm}(\pi') = \frac{1}{1 - \gamma} \mathbb{E}_{\substack{s \sim d^{\pi}\\a \sim \pi}} \left[ \frac{\pi'(a|s)}{\pi(a|s)} A_{\mathcal{R}}^{\pi}(s, a) \pm \frac{ \sqrt{2 \delta} \gamma \varepsilon_{\mathcal{R}}^{\pi'}}{1 - \gamma} \right]
    \end{equation*}
    then the following holds:
    \begin{equation}
        D_{\pi, \mathcal{R}}^+(\pi') \geq \mathcal{J}_{\mathcal{R}}(\pi') - \mathcal{J}_{\mathcal{R}}(\pi) \geq D_{\pi, \mathcal{R}}^-(\pi')
        \label{eqn_supp5}
    \end{equation}
\end{lemma}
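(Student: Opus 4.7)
The plan is to prove the two-sided bound by first applying the performance difference lemma (equation eqn\_supp1), then replacing the off-policy state distribution $d^{\pi'}$ by the on-policy $d^\pi$ via a one-step importance sampling argument over actions, and finally controlling the residual shift in state visitations through Pinsker's and Jensen's inequalities so as to recover exactly the $\sqrt{2\delta}\,\gamma\,\varepsilon_\mathcal{R}^{\pi'}/(1-\gamma)$ correction that appears inside the expectation defining $D_{\pi,\mathcal{R}}^{\pm}$.

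First, equation eqn\_supp1 gives $\mathcal{J}_\mathcal{R}(\pi')-\mathcal{J}_\mathcal{R}(\pi)=\frac{1}{1-\gamma}\mathbb{E}_{s\sim d^{\pi'},\,a\sim\pi'}[A_\mathcal{R}^\pi(s,a)]$, while importance sampling on the action shows that the central term of $D_{\pi,\mathcal{R}}^{\pm}$ equals $\frac{1}{1-\gamma}\mathbb{E}_{s\sim d^\pi,\,a\sim\pi'}[A_\mathcal{R}^\pi(s,a)]$. Their difference is therefore $\frac{1}{1-\gamma}\sum_s\bigl(d^{\pi'}(s)-d^\pi(s)\bigr)g(s)$ with $g(s)=\mathbb{E}_{a\sim\pi'}[A_\mathcal{R}^\pi(s,a)]$ satisfying $|g(s)|\le\varepsilon_\mathcal{R}^{\pi'}$ by definition. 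Hence this residual is bounded in absolute value by $\frac{\varepsilon_\mathcal{R}^{\pi'}}{1-\gamma}\|d^{\pi'}-d^\pi\|_1 = \frac{2\varepsilon_\mathcal{R}^{\pi'}}{1-\gamma}D_{TV}(d^{\pi'},d^\pi)$. I would then invoke the standard discounted-visitation shift inequality $D_{TV}(d^{\pi'},d^\pi)\le\frac{\gamma}{1-\gamma}\mathbb{E}_{s\sim d^\pi}[D_{TV}(\pi'(\cdot|s)\|\pi(\cdot|s))]$, apply Pinsker's inequality $D_{TV}(\pi'\|\pi)[s]\le\sqrt{D_{KL}(\pi'\|\pi)[s]/2}$ pointwise, and use Jensen's inequality on the concave square root to pull the state expectation inside and get $\mathbb{E}_{s\sim d^\pi}[D_{TV}(\pi'\|\pi)[s]]\le\sqrt{\delta/2}$. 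Chaining these yields an absolute residual at most $\frac{2\varepsilon_\mathcal{R}^{\pi'}}{1-\gamma}\cdot\frac{\gamma}{1-\gamma}\cdot\sqrt{\delta/2}=\frac{\sqrt{2\delta}\,\gamma\,\varepsilon_\mathcal{R}^{\pi'}}{(1-\gamma)^2}$, which is precisely the correction term in $D_{\pi,\mathcal{R}}^{\pm}$ once the constant added inside the expectation is pulled back out; the sandwich $D_{\pi,\mathcal{R}}^-(\pi')\le\mathcal{J}_\mathcal{R}(\pi')-\mathcal{J}_\mathcal{R}(\pi)\le D_{\pi,\mathcal{R}}^+(\pi')$ follows immediately.

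The main obstacle will be the state-distribution shift bound $D_{TV}(d^{\pi'},d^\pi)\le\frac{\gamma}{1-\gamma}\mathbb{E}_{s\sim d^\pi}[D_{TV}(\pi'\|\pi)[s]]$, because $d^\pi$ depends on $\pi$ non-trivially through the full transition operator $P_\pi$. The cleanest route is to use the representation $d^\pi=(1-\gamma)\rho(I-\gamma P_\pi)^{-1}$ together with the resolvent identity $(I-\gamma P_{\pi'})^{-1}-(I-\gamma P_\pi)^{-1}=\gamma(I-\gamma P_{\pi'})^{-1}(P_{\pi'}-P_\pi)(I-\gamma P_\pi)^{-1}$, and then bound the operator norm of $P_{\pi'}-P_\pi$ in the $L_1$ sense using the per-state kernel estimate $\|P_{\pi'}(\cdot|s)-P_\pi(\cdot|s)\|_1\le 2D_{TV}(\pi'(\cdot|s)\|\pi(\cdot|s))$; once this ingredient is in place the remaining Pinsker and Jensen steps are routine and the claim drops out.
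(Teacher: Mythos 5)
Your proposal is correct: the paper itself imports this lemma from Achiam et al.~(2017) without proof, and your argument faithfully reconstructs the standard derivation from that reference — performance difference lemma, action-level importance sampling to move to $d^{\pi}$, a H\"older bound on the visitation-shift residual using $|\mathbb{E}_{a\sim\pi'}[A_{\mathcal{R}}^{\pi}(s,a)]|\le\varepsilon_{\mathcal{R}}^{\pi'}$, the resolvent-identity bound $D_{TV}(d^{\pi'},d^{\pi})\le\frac{\gamma}{1-\gamma}\mathbb{E}_{s\sim d^{\pi}}[D_{TV}(\pi'\Vert\pi)[s]]$, and then Pinsker plus Jensen to reach $\frac{\sqrt{2\delta}\,\gamma\,\varepsilon_{\mathcal{R}}^{\pi'}}{(1-\gamma)^2}$, which matches the constant $\frac{\sqrt{2\delta}\,\gamma\,\varepsilon_{\mathcal{R}}^{\pi'}}{1-\gamma}$ carried inside the $\frac{1}{1-\gamma}\mathbb{E}[\cdot]$ defining $D_{\pi,\mathcal{R}}^{\pm}$. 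No gaps; the one nontrivial ingredient you flag (the discounted-visitation shift bound) is exactly Lemma~3 of the cited work and is proved by the route you describe.
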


The above lemma can be similarly defined with respect to the cost function/s $\mathcal{C}$. Here we are trying to learn a conservative policy to stay safe considering environmental uncertainties by using CELU function to design our penalty with respect to the cost function. This applies an incentive with respect to the cost function for $\mathcal{L}_{\mathcal{C}_i}(\pi) < 0$. Although this method produces a safer policy, as seen from the empirical results, this penalty function however produces a limit on the optimal reward owing to its restrictiveness, since the policy keeps updating due to the cost function beyond $\mathcal{L}_{\mathcal{C}_i}(\pi) = 0$.

We had stated earlier in the Practical Implementation section that $\text{CELU}(\mathcal{L}_{\mathcal{C}_i}(\pi))$ worked good for our experiments, mathematically it induces a residual gradient. This can be removed by using $\max(\text{CELU}(\mathcal{L}_{\mathcal{C}_i}(\pi)), -\alpha(1 - h))$, where $0 < h < \alpha$ and $h \in \mathbb{R}$. This stops updates when $\text{CELU}(\mathcal{L}_{\mathcal{C}_i}(\pi)) \leq -\alpha(1 - h)$. From the equation of CELU we get the point at which the loss updates due to cost stop instead of at $\mathcal{L}_{\mathcal{C}_i}(\pi) = 0$.
\begin{gather*}
    \alpha \left( \exp \left( \frac{\mathcal{L}_{\mathcal{C}_i}(\pi)}{\alpha} \right) - 1 \right) = -\alpha(1 - h) \\
    \Rightarrow \mathcal{L}_{\mathcal{C}_i}(\pi) = \alpha \log(h)
\end{gather*}
This imposes an error bound of $|\alpha \log(h)|$ for each constraint on the optimal problem \ref{eqn_supp3}. From Lemma 2, given $\hat{\pi}$ is the optimal solution for Equation \ref{eqn_supp3}, combining the above lemma with the error bound we get the total error bound on the optimal solution.

\begin{align*}
    | \mathcal{L}(\hat{\pi}) & - \mathcal{L}(\pi_K) | \\
    & \leq | \mathcal{L}_{\mathcal{R}}(\hat{\pi}) -\mathcal{L}_{\mathcal{R}}(\pi_K)| \\
    & \quad + \eta \sum_{i=1}^m | \text{CELU}( \mathcal{L}_{\mathcal{C}_i}(\hat{\pi})) - \text{CELU}(\mathcal{L}_{\mathcal{C}_i}(\pi_K)) | \\
    & \leq | \mathcal{L}_{\mathcal{R}}(\hat{\pi}) -\mathcal{L}_{\mathcal{R}}(\pi_K)| + \eta \sum_{i=1}^m | \mathcal{L}_{\mathcal{C}_i}(\hat{\pi}) -\mathcal{L}_{\mathcal{C}_i}(\pi_K)|
\end{align*}
From Equation \ref{eqn_supp5}, we derive that,
\begin{align*}
    | \mathcal{L}(\hat{\pi}) & - \mathcal{L}(\pi_K) | \\
    & \leq \frac{ \sqrt{2 \delta} \gamma \varepsilon_{\mathcal{R}}^{\hat{\pi}}}{1 - \gamma} + \eta \sum_{i=1}^m \left[ \frac{ \sqrt{2 \delta} \gamma \varepsilon_{\mathcal{C}_i}^{\hat{\pi}}}{1 - \gamma} + | \alpha \log(h) | \right] \\
\end{align*}
This completes the proof of Theorem 2.

\section{Empirical Details}

We showed evaluation results across three widely-used safe RL environments: MuJoCo Safety Velocity \cite{ji2023safety}, Safety Gymnasium \cite{ray2019benchmarking}, and Bullet Safety Gymnasium \cite{gronauer2022bullet}. We also conducted experiments using the MetaDrive simulator \cite{li2022metadrive}. All the environments can be defined using a CMDP.

\subsection{Single Agent Environments}

Given an agent, the objective is to find a policy, $\pi \in \Pi_{\mathcal{C}}$ that maximizes reward. The reward and cost functions vary depending on the environments. We trained different agents in each environment to demonstrate the efficacy of our approach. The details of environments are described below.

\textit{Run Tasks}: The environments consist of simulation of autonomous robots based on MuJoCo simulator. The objective is to train agents to run on a plain surface, given the cost function defined on the velocity: $\sqrt{(v_{x}^2) + (v_{y}^2)}$. We trained four different agents, Ant, Half-Cheetah, Humanoid, and Swimmer from this environment. We also performed some ABlation experiments on these environments.

\textit{Safety Gymnasium}: This environment consists of multiple robots to be trained under various constrained environments. We used two agents, Point and Car for our experiments. Depending on the scenarios the cost function is designed. Such as in Goal tasks the objective is to reach a predefined goal, in Button tasks, the objective is to go near button objects. In all tasks, the agent has to avoid certain predefined regions that incur costs. The objects are identified via lidar beams, that are a part of the observation.

\textit{Bullet Safety Gymnasium}: Here, the task is to control objects such as Ball, and Car to perform tasks such as going in a circle very fast, while staying within predefined boundaries, or reaching a certain goal. We trained the above two agents on four scenarios from this benchmark environment.

\subsection{Multi-Agent Environments}

The Multi-Agent environments consist of multiple agents in a single scenario. We used MetaDrive scenarios, where the task is to control multiple cars, given a certain road situation. We performed experiments on scenarios that have an innate safety risk (due to crashing or going away from road) among agents, i.e. Parking Lot and unmanned Intersection. We used agent termination factor as our safety objective. The observation of each agent may only inform a partial scenario of the complete environment. The problem is generally solved by using an RNN in the policy network. We used GRU here. Also, for enforcing co-operation, we used summation of the local rewards for each agent, to train our policy following the theory of Value Decomposition Network \cite{sunehag2017value}.

Each vehicle (agent) has continuous action space. Crashes or road exits result in episode termination for the respective agent. During training, episodes are terminated when more than half the agents are eliminated, and during evaluation, termination occurs upon the first agent’s crash or completion of the scenario. We used lidar-based observations as inputs to the policy, where each agent acts upon their own local observation, and trained 10 agents in each environment. The evaluation was conducted in two challenging scenarios: Intersection and Parking Lot, both characterized by high probabilities of collisions, making them suitable for evaluating safety objectives.

\section{Hyper-parameter Settings}

Following are the hyper-parameters we used for our experiments. 

\begin{table}[!hb]
    \centering
\caption{Hyper-parameters used for our experiments}
\label{tab_supp1}
    \begin{tabular}{cc}
    \hline
         Hyper-parameter& Value\\
    \hline
         Policy Network Size& [64, 64]\\
 Value Network size& [64, 64]\\
 Policy Learning rate&0.0003\\
 Value Learning rate&0.0003\\
 batch size&64\\
 Activation&Tanh\\
 Epochs&500\\
 Steps / Epoch&5000\\
 Gamma ($\gamma$)&0.99\\
 clip factor ($\varepsilon$) &0.2\\
 $\eta$&20.0\\
 Advantage Estimation &GAE\\
 $\lambda$ & 0.95 \\
 $\lambda_{\mathcal{C}}$ & 0.95 \\
 \hline
    \end{tabular}
\end{table}

We conducted our experiments using a system equipped with 12th generation Intel(R) Core(TM) i7 CPU processor with 16GB system memory, and a NVIDIA GeForce RTX 3060 GPU with 6GB graphic memory.

\end{document}